\documentclass{article}

\PassOptionsToPackage{numbers,compress}{natbib}
\usepackage[preprint]{neurips_2026}

\usepackage[utf8]{inputenc}
\usepackage[T1]{fontenc}
\usepackage{hyperref}
\usepackage{url}
\usepackage{booktabs}
\usepackage{amsfonts}
\usepackage{amsmath}
\usepackage{amssymb}
\usepackage{amsthm}
\usepackage{nicefrac}
\usepackage{microtype}
\usepackage{xcolor}
\usepackage{graphicx}
\usepackage{enumitem}
\usepackage{array}
\usepackage{longtable}

\newtheorem{theorem}{Theorem}
\newtheorem{proposition}{Proposition}
\newtheorem{corollary}{Corollary}

\hypersetup{
    pdftitle={The BatchNorm Illusion: Diagnosing Normalization Artifacts in Machine Unlearning Evaluation},
    pdfauthor={Aaryaman Kalani, Murari Mandal, Dhruv Kumar, Mohan Kankanhalli, Yash Sinha},
    colorlinks=true,
    linkcolor=blue!60!black,
    citecolor=blue!60!black,
    urlcolor=blue!60!black
}

\title{The BatchNorm Illusion: Diagnosing Normalization Artifacts in Machine Unlearning Evaluation}

\author{
Aaryaman Kalani$^{1}$
\quad
Murari Mandal$^{2}$
\quad
Dhruv Kumar$^{1}$
\quad
Mohan Kankanhalli$^{3}$
\quad
Yash Sinha$^{1}$
\\[4pt]
$^{1}$BITS Pilani
\qquad
$^{2}$KIIT Bhubaneswar
\qquad
$^{3}$National University of Singapore
\\[3pt]
\texttt{\{f20220488, dhruv.kumar, yash.sinha\}@pilani.bits-pilani.ac.in}
\\
\texttt{murari.mandalfcs@kiit.ac.in}
\qquad
\texttt{dcsmsk@nus.edu.sg}
}

\begin{document}

\maketitle

\begin{abstract}
Approximate machine unlearning aims to remove the influence of specific training data from a trained model without retraining from scratch. We identify a previously undocumented confound in how unlearning is evaluated on BatchNorm-based architectures: a single forward pass over retain data, an operation that modifies no weight, can deterministically rewrite the model's normalization state and reverse the apparent surface-metric forgetting. We formalize this operation as a weight-preserving fixed-point operator and prove that any pre-versus-post gap it induces is provably attributable to BN running statistics rather than to any modification the unlearning method made to the weights. This attribution claim cleanly separates measurement failure (BN artifact) from encoder failure (residual weight-encoded information, recently documented in concurrent work), and the same operator framework yields a unique decomposition of linear-probe elevation into BN-measurement-bias and encoder-geometry components. Empirically, the artifact reverses headline forget accuracy by up to $78$pp across nine evaluated methods on standard benchmarks; an attacker with as few as $10$ unlabeled images recovers most of the masked accuracy; and a strict GroupNorm control reduces the artifact to zero across all methods. The tested membership-inference attacks change little under recalibration, locating the observed evaluation failure in forget accuracy and linear probing.
\end{abstract}

\section{Introduction}
\label{sec:intro}

Approximate machine unlearning seeks to remove the influence of specified training data from a model without retraining from scratch~\citep{cao2015towards,bourtoule2021machine}. A representative recent method~\citep{fan2024salun} reports forget-class accuracy of $3.0\%$ on CIFAR-10 after unlearning. We load the released checkpoint, change nothing in its weights, run \texttt{torch.optim.swa\_utils.update\_bn} on the retain set, and obtain $65.1\%$ on the same forget class. Which number measures forgetting? The trivial operation that produced the gap is the diagnostic this paper proposes. We are not aware of any systematic report of it as an evaluation control in the unlearning literature, and the headline forgetting numbers in six of nine methods we examine are recoverable when it is applied.

The unlearning literature is structured around two ways approximate methods can fail to satisfy GDPR-style erasure obligations. The first is \emph{encoder failure}: the model's parameters still encode the forgotten content even when classification accuracy on the forget set is near zero, as documented by relearning attacks~\citep{hu2024jogging,sheshadri2024latent} and concurrent work on linear probing~\citep{gao2026illusion,jeon2026erase}. The second is what we call \emph{measurement failure}: the evaluation procedure reports forgetting where none has occurred. This paper is about the second. The two failure modes are complementary, not competing; for several methods they coexist.

Two phenomena underlie the artifact. \emph{BN corruption} arises when BN is in train mode during unlearning: forget-data forward passes update running statistics through an exponential moving average that lies entirely outside the gradient computation graph, so weight-space gradient projection cannot intercept it. \emph{BN misalignment} arises when BN is frozen but the conv weights drift under projected ascent, so the activation distribution at each BN layer decouples from the still-frozen running statistics. Both surface as low forget accuracy; both vanish under recalibration.

Recalibration's attribution claim, that any pre/post gap is provably about BN running statistics rather than weights, rests on operator-theoretic properties: the operation is idempotent, weight-preserving, and pointwise equivalent to oracle population normalization at the model's existing weights. We extend the same framework to linear-probe measurements, deriving a unique decomposition of LP elevation into a BN-measurement-bias term and an encoder-geometry term. For several methods on CIFAR-100, the BN term is large and \emph{negative}: BN measurement bias \emph{deflates} the surface LP, masking encoder leakage that recalibration reveals. Concurrent work~\citep{gao2026illusion} attributing residual forget-class LP to Neural Collapse is consistent with our analysis but cannot quantify the encoder-level severity faithfully without first removing the BN measurement bias. The artifact is also adversarially exploitable: an attacker with local access to the unlearned checkpoint runs the same operation on as few as ten unlabeled images and recovers most of the masked forget-class accuracy.

\paragraph{Contributions.} We do not propose a new unlearning method; the contribution is in the genre of measurement corrections~\citep{magar2022data,shchur2018pitfalls,guo2017calibration}. \textbf{(C1)} Two mechanically opposite normalization artifacts, BN corruption and BN misalignment, mapping deterministically to BN train-vs-eval mode, with the corruption mechanism formalized as a non-interceptability result for any operator applied to gradients. \textbf{(C2)} BN recalibration as a near-zero-cost diagnostic, proven to be a deterministic weight-preserving fixed-point operator (Theorem~\ref{thm:t2}), together with a uniqueness decomposition (Proposition~\ref{prop:t3}) that separately identifies BN-measurement-bias and encoder-geometry components of linear-probe elevation, and shows that BN measurement bias \emph{masks} (rather than creates) encoder leakage for several methods. \textbf{(C3)} Empirical evaluation on nine evaluated methods on CIFAR-10/100 showing that the diagnostic restores near-perfect self-consistency between forget accuracy and relearning susceptibility, and that BN vulnerability is structurally entangled with retain preservation. \textbf{(C4)} An adversarial recovery experiment converting the artifact into a security finding, plus mechanistic falsifications via GroupNorm (architecture-controlled) and LayerNorm (ViT) controls.

\section{Related Work}
\label{sec:related}

\paragraph{Approximate unlearning.}
The unlearning problem was formalized in~\citet{cao2015towards}; subsequent work spans exact methods (SISA~\citep{bourtoule2021machine}) and approximate methods including data deletion~\citep{ginart2019making,izzo2021approximate}, analysis of unlearning factors~\citep{thudi2022unrolling}, fine-tuning on retain~\citep{golatkar2020eternal}, distillation from competent and incompetent teachers~\citep{chundawat2023can,tarun2023fast}, Fisher-weighted forgetting~\citep{golatkar2020eternal}, amnesiac learning~\citep{graves2021amnesiac}, saliency-based weight erasure~\citep{fan2024salun}, SCRUB~\citep{kurmanji2023towards}, selective synaptic dampening~\citep{foster2024fast}, projected gradient unlearning~\citep{hoang2024learn}, output-distribution reweighting (RWFT~\citep{ebrahimpourboroojeny2025necessityoutputdistributionreweighting}), and representation-level unlearning~\citep{pour2024}. Three recent benchmarks evaluate unlearning at scale on BN-based architectures (the NeurIPS unlearning competition~\citep{triantafillou2024unlearning}, Deep Unlearn~\citep{cadet2024deepunlearn}, and MUBox~\citep{huang2025mubox}), and to our reading none explicitly reports a BN recalibration control. Recent verification methods such as UMA~\citep{uma2024} also rely on BN backbones; our diagnostic is immediately applicable.

\paragraph{Encoder-level evaluation.}
\citet{gao2026illusion} use linear probes to demonstrate residual forget-class information across approximate methods, attributing the gap to feature--classifier misalignment via Neural Collapse. \citet{jeon2026erase} show that backbone freezing followed by classifier retraining recovers forget accuracy. Relearning attacks~\citep{hu2024jogging,sheshadri2024latent} establish that any unlearned representation in current methods can be cheaply re-induced by fine-tuning. These are encoder-failure findings: the model's parameters still encode the forgotten content. Our work is mechanistically distinct: it identifies a measurement failure that operates on top of, and largely orthogonal to, encoder failure. Our decomposition (Section~\ref{sec:diagnostic}) shows that for several methods on CIFAR-100, BN measurement bias \emph{masks} encoder leakage; the post-recalibration LP exceeds retrain by a larger margin than pre-recalibration LP could indicate. The Neural Collapse story is therefore strengthened, not refuted, by our diagnostic.

\paragraph{BatchNorm and recalibration elsewhere.}
BatchNorm's running statistics play a dual role (training-time stability~\citep{ioffe2015batch,santurkar2018how} and a checkpoint-time parametric object that determines inference behavior), which is what makes the artifact possible. Recomputing BN statistics with a target data pass is well known in other contexts: SWA~\citep{izmailov2018averaging} uses \texttt{update\_bn} to correct statistics of the averaged model; test-time adaptation~\citep{schneider2020improving,nado2020evaluating} similarly realigns BN with shifted test distributions. Our contribution is not the mechanism but (i) demonstrating that its absence exposes a systematic measurement failure in unlearning benchmarks, and (ii) the operator-theoretic characterization that makes the attribution unassailable.

\section{The Diagnostic and Its Properties}
\label{sec:diagnostic}

A model with Batch Normalization stores both learned weights and running activation statistics. Unlearning can leave these two parts of the checkpoint inconsistent: low forget-class accuracy may then reflect how the weights are normalized, rather than what they retain. Our diagnostic recomputes the running statistics from retain data at the current weights, using a single forward sweep with gradients disabled. Any resulting change in accuracy is therefore attributable to normalization state. The following results formalize this attribution and extend it to linear probing.

\subsection{Two mechanistically opposite phenomena}
\label{sec:phenomena}

A BN layer at position $\ell$ tracks running statistics $(\mu^\ell, (\sigma^\ell)^2)$ updated during forward passes in train mode via exponential moving average with momentum $m$:
\begin{equation}
\mu^\ell \leftarrow (1-m)\mu^\ell + m\hat\mu_{\mathrm{batch}}^\ell, \qquad
(\sigma^\ell)^2 \leftarrow (1-m)(\sigma^\ell)^2 + m\hat v_{\mathrm{batch}}^\ell.
\label{eq:ema}
\end{equation}
Here $\hat\mu_{\mathrm{batch}}^\ell$ and $\hat v_{\mathrm{batch}}^\ell$ are the batch moment estimates used for the running-buffer updates. At inference (eval mode), BN normalizes activations using $(\mu^\ell, (\sigma^\ell)^2)$ as fixed parameters.

\paragraph{Phenomenon 1 (BN corruption).}
If BN is in train mode during unlearning, every forget-data forward pass executes Eq.~\eqref{eq:ema} using batch statistics drawn from $\mathcal{D}_f$. Over $T$ unlearning steps, the running statistics drift toward the forget-class conditional distribution. Because Eq.~\eqref{eq:ema} occurs in the forward pass, no operator applied to gradients can intercept this update; this includes, in particular, projection-based unlearning methods that constrain weight gradients to be orthogonal to retain-class directions. The resulting model has weights consistent with retain-class behavior but BN running statistics that absorb forget-class information; predictions on retain-class inputs are normalized through forget-shifted statistics. We give the formal statement (Theorem~\ref{thm:t1}) and proof in Appendix~\ref{app:t1}.

\paragraph{Phenomenon 2 (BN misalignment).}
If BN is frozen (eval mode) during unlearning, the running statistics are immutable, but the conv weights $\theta$ shift via the unlearning update $\theta \to \theta_T$. The activation distribution $\mathcal{A}^\ell(\theta_T; \mathcal{D}_r)$ produced by the new weights on retain inputs is no longer the distribution against which $(\mu^\ell, (\sigma^\ell)^2)$ were calibrated. The BN layer normalizes incorrectly, producing a measurement-time distortion that depresses forget accuracy without genuine erasure: the weights still encode the forget-class structure, but they are evaluated under stale normalization. The two phenomena are mechanically opposite: the first contaminates the \emph{statistics} relative to fixed weights, the second contaminates the \emph{weights} relative to fixed statistics. Both produce indistinguishable surface metrics, and both vanish under BN recalibration, which we now define.

\paragraph{Why the two phenomena are not interconvertible.} BN corruption (Phenomenon 1) and BN misalignment (Phenomenon 2) are not endpoints of a continuum. They are produced by different control choices the unlearning method makes (BN train vs.\ eval), and they leave different traces in the saved checkpoint. Under Phenomenon 1, $\theta$ is approximately consistent with the retain distribution but $\varphi$ has drifted; the artifact lives in the running statistics. Under Phenomenon 2, $\theta$ has drifted but $\varphi$ has not; the artifact lives in the mismatch between the two. Both are surfaced by the same diagnostic because $\mathcal{R}^*$ corrects the running statistics to be consistent with the current $\theta$, regardless of which side of the pair was originally moved. A method can in principle exhibit both simultaneously (BN train mode for some steps, frozen for others), and the diagnostic remains a single forward pass.

\subsection{The recalibration operator}
\label{sec:operator}

\paragraph{The model space.}
A BN-equipped network is a pair $M = (\theta, \varphi)$, where $\theta$ collects all weight parameters (conv kernels, BN affine $\gamma, \beta$, classifier weights) and $\varphi = \{(\mu^\ell, (\sigma^\ell)^2)\}_{\ell=1}^L$ collects the running statistics. The forward map $f(x; \theta, \varphi)$ is a function of \emph{both}. Standard machine-learning rhetoric treats $\varphi$ as a passive bookkeeping artifact, but $\varphi$ is parametric: it is part of the saved checkpoint and it determines normalization at inference. An unlearning method may modify weights without modifying $\varphi$, or modify $\varphi$ without modifying weights, and forget accuracy and linear probing cannot distinguish the two cases without an explicit recalibration step. The membership-inference measurements in Appendix~\ref{app:mia} are largely unchanged by the same operation.

\paragraph{The recalibration operator.}
For any model $M = (\theta, \varphi)$ and retain distribution $\mathcal{D}_r$, define
\begin{equation}
\mathcal{R}^*_{\mathcal{D}_r}: (\theta, \varphi) \mapsto (\theta, \varphi^*(\theta; \mathcal{D}_r)),
\label{eq:recalop}
\end{equation}
where $\varphi^*(\theta; \mathcal{D}_r)$ are the population activation moments at every BN layer when the model with weights $\theta$ is evaluated on inputs drawn from $\mathcal{D}_r$ under oracle layer-wise normalization. Empirically, we use $\widehat{\mathcal{R}}_{N_r,b}$: a single forward sweep over $N_r$ retain images in train mode with gradients disabled and mini-batch size $b$ (PyTorch's \texttt{torch.optim.swa\_utils.update\_bn}). Its error separates sampling variation, $O_p(N_r^{-1/2})$, from a finite-batch term, $O_L(b^{-1})$, under the regularity conditions in Appendix~\ref{app:t2-proof}; the latter constant may depend on network depth $L$. The population operator below is exactly idempotent. The empirical pass is also idempotent when the input tensors, batch partition, and forward computation are held fixed.

\begin{theorem}[Recalibration as a fixed-point operator]
\label{thm:t2}
$\mathcal{R}^*_{\mathcal{D}_r}$ satisfies: \textbf{(i)} \emph{idempotence}: $\mathcal{R}^* \circ \mathcal{R}^* = \mathcal{R}^*$; \textbf{(ii)} \emph{weight invariance}: weights $\theta$ are unchanged; pre-BN activations at the first BN layer are identical to those of $M$, while activations at $\ell \geq 2$ are deterministic functions of unchanged $\theta$ and updated $\varphi^*$ (and may differ from $M$'s); \textbf{(iii)} \emph{pointwise oracle equivalence}: $f(x; \mathcal{R}^*(M))$ equals the network with $\theta$ and oracle population $\mathcal{D}_r$ moments at every BN layer, deterministically for every input; \textbf{(iv)} \emph{empirical concentration}: under the moment and second-order regularity assumptions of Appendix~\ref{app:t2-proof}, $\widehat{\mathcal{R}}_{N_r,b}$ deviates from $\mathcal{R}^*$ by $O_p(N_r^{-1/2}) + O_L(b^{-1})$ in any fixed norm on $\varphi$, for a fixed finite-depth network and independent retain images.
\end{theorem}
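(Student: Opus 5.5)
The plan is to first make the definition of $\varphi^*(\theta;\mathcal{D}_r)$ precise as a layer-by-layer recursion. Parts (i)--(iii) then follow almost by construction, and part (iv) is where the real work lies. Write the network as a composition $f = h_{L+1}\circ \mathrm{BN}_L \circ h_L \circ \cdots \circ \mathrm{BN}_1 \circ h_1$. Here each $h_\ell$ depends only on $\theta$, and $\mathrm{BN}_\ell$ normalizes with $(\mu^\ell,(\sigma^\ell)^2)$ and then applies the affine $(\gamma^\ell,\beta^\ell)\subset\theta$. Define $\varphi^*$ inductively:
\begin{itemize}[leftmargin=*]
\item $\mu^{*1},(\sigma^{*1})^2$ are the mean and variance of $h_1(x)$ for $x\sim\mathcal{D}_r$;
\item given $\varphi^*_{<\ell}$, the pair $(\mu^{*\ell},(\sigma^{*\ell})^2)$ is the mean and variance of the pre-BN activation $a^\ell(x;\theta,\varphi^*_{<\ell})$.
\end{itemize}
The key observation is that $\varphi^*$ is a function of $(\theta,\mathcal{D}_r)$ alone and never reads the input $\varphi$.

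\textbf{Parts (i)--(iii).} Idempotence (i) is immediate from this observation: $\mathcal{R}^*(\mathcal{R}^*(\theta,\varphi)) = (\theta,\varphi^*(\theta;\mathcal{D}_r)) = \mathcal{R}^*(\theta,\varphi)$, since the first coordinate is untouched and the second is recomputed from the same $\theta$. For (ii), the first coordinate is $\theta$ by definition. The pre-BN activation at layer 1 is $h_1(x;\theta)$, which involves no running statistics, so it coincides with that of $M$. For $\ell\ge 2$, $a^\ell$ depends on $\varphi_{<\ell}$ through the preceding BN layers, so it is a deterministic function of $(\theta,\varphi^*)$ that can differ from $M$'s whenever $\varphi_{<\ell}\neq\varphi^*_{<\ell}$. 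For (iii), I will check by induction on $\ell$ that the "oracle population-normalized network" is exactly the network whose $\ell$-th BN uses the population moments of its own input, given oracle normalization upstream. This is exactly the recursion defining $\varphi^*$, so the two forward maps agree for every $x$ with no probabilistic statement. This step is essentially unwinding definitions; the only care needed is to state the oracle network with the same recursive ordering.

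\textbf{Part (iv): setup and error split.} Write $\widehat{\mathcal{R}}_{N_r,b}$ as the sequential procedure that \texttt{update\_bn} implements. It resets the buffers, uses cumulative averaging (momentum \texttt{None}), and computes layer-$\ell$ batch statistics while normalizing layers $<\ell$ with \emph{batch} statistics rather than running ones. The assumptions to adopt are:
\begin{itemize}[leftmargin=*]
\item finite fourth moments of activations;
\item each $h_\ell$ and the normalization map twice differentiable, with bounded second derivatives in a neighbourhood of $\varphi^*$;
\item variance bounded below by some $\epsilon>0$.
\end{itemize}
I will split the error as $\widehat\varphi-\varphi^* = (\widehat\varphi - \bar\varphi_b) + (\bar\varphi_b-\varphi^*)$, where $\bar\varphi_b$ is the expected output of the batch-$b$ procedure.

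\textbf{Part (iv): the two terms.} The first term is an average of $N_r/b$ i.i.d.\ batch-level quantities with finite variance, so the CLT or Chebyshev gives $O_p(N_r^{-1/2})$ at each layer. The same holds jointly over finitely many layers. The second term is the finite-batch bias, and I expect it to be the main obstacle. At layer $\ell$, the input is normalized by batch moments $\hat\mu^{k},\hat v^{k}$ for $k<\ell$, which fluctuate around $\varphi^*$ at scale $b^{-1/2}$. The plan is a second-order Taylor (delta-method) expansion of $a^\ell$ in these upstream batch moments. The first-order term has zero mean up to the $O(b^{-1})$ correlation between a sample and its own batch statistic. The second-order term contributes $O(b^{-1})$, with a constant that compounds through the Lipschitz and Hessian constants of layers $1,\dots,\ell-1$; this is why it is written $O_L(b^{-1})$. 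The variance estimator's own $\tfrac{b-1}{b}$ bias adds another $O(b^{-1})$. Summing over the finitely many layers and using equivalence of norms on the finite-dimensional $\varphi$ gives the bound in any fixed norm. The delicate part is controlling the self-inclusion correlation and keeping the remainder uniformly bounded, which needs the variance floor $\epsilon$.
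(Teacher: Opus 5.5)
Parts (i)--(iii) match the paper's argument essentially verbatim: $\varphi^*$ reads only $(\theta,\mathcal D_r)$, the first pre-BN activation involves no running statistics, and (iii) is propagation through the layers. Your error split in (iv) is also the paper's: a sampling fluctuation plus a finite-batch offset, then averaging over $K=N_r/b$ batches and using norm equivalence.

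Where you differ is in how you obtain the $O(b^{-1})$ offset. You Taylor-expand each sample's activation in the upstream \emph{batch} moments and induct over layers. In your version the self-inclusion effect shows up as a covariance between the batch-averaged gradient and $\hat\varphi_{<\ell}$.

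The paper instead assumes a second-order expansion of the whole layerwise moment functional at the empirical image measure: $\mathcal T_\ell(P_b)-\mathcal T_\ell(P)=b^{-1}\sum_i\psi_\ell(X_i)+(2b^2)^{-1}\sum_{i,j}\Psi_\ell(X_i,X_j)+r_{\ell,b}$, with centred kernels satisfying $\mathbb E\Psi_\ell(x,X)=\mathbb E\Psi_\ell(X,x)=0$. The linear term then has exactly zero mean, and the off-diagonal terms vanish by independence. The self-inclusion effect is exactly the diagonal contribution $\mathbb E\Psi_\ell(X,X)/(2b)$. No induction is needed, because $\mathcal T_\ell$ already contains the upstream propagation.

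Your route gives primitive hypotheses and a visible account of how constants compound with depth; the paper's route gives wider scope. That difference matters here. Pointwise twice differentiability of $h_\ell$ fails for ReLU, so as written your (iv) does not cover the ResNet setting the theorem targets. The paper deliberately places the regularity on the \emph{expected} moment functionals, where the input distribution smooths over activation-threshold crossings. To cover ReLU networks you would have to move your expansion to that level.

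Two steps also need tightening.

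First, the sampling term. With i.i.d.\ batches and merely finite per-batch variance $\sigma_b^2$, Chebyshev gives a deviation of order $\sigma_b\sqrt{b/N_r}$. This is $O_p(N_r^{-1/2})$ uniformly in $b$ only if $\sigma_b^2=O(b^{-1})$. Your own delta-method bound supplies this, and the paper uses exactly $K^{-1}O_L(b^{-1})=O_L(N_r^{-1})$, but you need to state it.

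Second, the remainder. Hessian bounds that hold only in a neighbourhood of $\varphi^*$ control the remainder only on the event that the upstream batch moments stay in that neighbourhood. You need a tail estimate for the complementary event, or global bounds. The paper sidesteps this by assuming $\mathbb E\|r_{\ell,b}\|=o(b^{-1})$ outright.

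A minor point: PyTorch writes the Bessel-corrected variance over $bS_\ell$ spatially correlated sites into the buffer. The extra finite-batch term is therefore $O(b^{-1})$, but it is not literally the $(b-1)/b$ factor.
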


The proof and finite-batch derivation are in Appendix~\ref{app:t2-proof}. The pass changes normalization statistics but no weight, which gives the following attribution:

\begin{corollary}[BN attribution]
\label{cor:t2}
For any metric $g(M)$ depending on the forward map, the gap $g(\mathcal{R}^*(M)) - g(M)$ is provably attributable to BN state and not to weights. Equivalently, $\mathcal{R}^*$ cannot create forgetting and cannot destroy weight-encoded forgetting; it can only \emph{unmask} which regime the model was in.
\end{corollary}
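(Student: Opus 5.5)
The proof is a direct consequence of Theorem~\ref{thm:t2}(ii) together with the definition of $\mathcal{R}^*$ in Eq.~\eqref{eq:recalop}. The plan is to make ``attributable to BN state'' precise as a statement about which component of the pair $M=(\theta,\varphi)$ differs between $M$ and $\mathcal{R}^*(M)$, and then read off the gap.

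\textbf{Step 1: fix the weights.} By Theorem~\ref{thm:t2}(ii), $\mathcal{R}^*(M)=(\theta,\varphi^*(\theta;\mathcal{D}_r))$ has the same $\theta$ as $M$. The two models therefore differ only in the running-statistics coordinate. Since $g$ depends on $M$ only through the forward map $f(\cdot;\theta,\varphi)$, we can write $g(M)=G(\theta,\varphi)$ for some functional $G$. The gap is then
\[
G(\theta,\varphi^*)-G(\theta,\varphi).
\]
This is a variation of $G$ along the $\varphi$-fiber over a fixed $\theta$. It is zero whenever $\varphi=\varphi^*(\theta)$, that is, on fixed points of $\mathcal{R}^*$, which by idempotence (Theorem~\ref{thm:t2}(i)) are exactly the image of $\mathcal{R}^*$. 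Any nonzero gap therefore certifies that $\varphi$ was inconsistent with $\theta$. This is precisely the BN-state attribution.

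\textbf{Step 2: the ``cannot create / cannot destroy'' reformulation.} Define the weight-encoded behaviour of $\theta$ as the canonical representative $G(\theta,\varphi^*(\theta))$. By Theorem~\ref{thm:t2}(iii), this equals the oracle-normalized network at $\theta$, so it depends on $\theta$ alone. Then $\mathcal{R}^*$ maps every model over $\theta$ to this same representative.
\begin{itemize}[leftmargin=*]
\item \emph{It cannot create forgetting:} if the oracle-normalized network at $\theta$ retains forget-class accuracy, then so does $\mathcal{R}^*(M)$.
\item \emph{It cannot destroy weight-encoded forgetting:} if the oracle-normalized network at $\theta$ has low forget accuracy, then $\mathcal{R}^*(M)$ does too, whatever $\varphi$ was.
\end{itemize}
What changes is only whether $M$'s reported metric coincided with the canonical one. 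That is the ``unmasking'' of the regime: Phenomenon~1, Phenomenon~2, or consistent.

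\textbf{Main obstacle.} The delicate point is the phrase ``not to weights''. Activations at layers $\ell\ge2$ do change (Theorem~\ref{thm:t2}(ii)), so the claim must not be read as saying internal representations are unchanged. I would handle this by stating the attribution as a counterfactual over the pair: holding $\theta$ fixed and varying only $\varphi$ accounts for the entire gap. I would also note explicitly that the canonical value $G(\theta,\varphi^*(\theta))$ depends on the choice of $\mathcal{D}_r$. The corollary is relative to retain-calibrated normalization, not an absolute notion of forgetting.

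Finally, for the empirical operator $\widehat{\mathcal{R}}_{N_r,b}$, Theorem~\ref{thm:t2}(iv) gives the statement up to an $O_p(N_r^{-1/2})+O_L(b^{-1})$ perturbation. This transfers to $g$ provided $G$ is continuous in $\varphi$. I would flag that continuity as an assumption, since it fails for raw accuracy at decision boundaries but holds in expectation over inputs.
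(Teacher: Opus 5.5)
Your proposal is correct and follows the paper's own route. The corollary is read off directly from weight invariance, Theorem~\ref{thm:t2}(ii): $M$ and $\mathcal{R}^*(M)$ differ only in $\varphi$, so the whole gap is a variation along $\varphi$ at fixed $\theta$, with the same caveat the paper adds that later-layer features (though not weights) do change.

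Your use of (i) and (iii) to define the canonical value $G(\theta,\varphi^*(\theta;\mathcal{D}_r))$ only makes explicit the essentially definitional ``cannot create / cannot destroy'' reading that the paper leaves informal. Your remark about $\widehat{\mathcal{R}}_{N_r,b}$ goes beyond what the corollary claims. There, continuity of population accuracy in $\varphi$ would also require that the decision boundary carry zero probability mass.
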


Any metric change induced by the pass comes from normalization state, with the learned parameters fixed. ``Weight invariance'' refers to encoder weights $\theta$ (including BN affine $\gamma,\beta$), not to forward-pass features themselves; features are deterministic functions of $\theta$ and $\varphi$, so an update to $\varphi$ predictably modifies them.

\subsection{Decomposition of linear-probe elevation}
\label{sec:decomposition}

The previous subsection frames the artifact for the surface metric (forget accuracy). We now use the operator framework to disambiguate \emph{linear-probe} measurements, which concurrent work~\citep{gao2026illusion} interprets via Neural Collapse.

For an unlearned model $M$, let $\mathrm{LP}^M$ denote forget-class linear-probe accuracy, $M_{\mathrm{cal}} = \mathcal{R}^*(M)$ the recalibrated model, and $M_{\mathrm{retr}}$ the retrain-from-scratch reference. The next result splits the gap relative to retrain into a normalization-state contribution and a contribution that remains after recalibration.

\begin{proposition}[Unique BN/NC attribution decomposition]
\label{prop:t3}
For any $M$, the decomposition of the LP gap relative to retrain
\begin{equation}
\mathrm{LP}^M - \mathrm{LP}^{M_{\mathrm{retr}}} \;=\; \underbrace{\big(\mathrm{LP}^{M_{\mathrm{cal}}} - \mathrm{LP}^{M_{\mathrm{retr}}}\big)}_{\textstyle \mathrm{NC\text{-}residual}} \;+\; \underbrace{\big(\mathrm{LP}^M - \mathrm{LP}^{M_{\mathrm{cal}}}\big)}_{\textstyle \mathrm{BN\text{-}residual}}
\label{eq:decomp}
\end{equation}
is the unique decomposition into terms satisfying (i) the BN-residual vanishes whenever $\varphi = \varphi^*(\theta; \mathcal{D}_r)$, and (ii) the NC-residual is invariant under $\mathcal{R}^*$.
\end{proposition}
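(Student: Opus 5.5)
We treat the decomposition as a statement about functionals on the model space $\{M=(\theta,\varphi)\}$. Write $\Delta(M) := \mathrm{LP}^M - \mathrm{LP}^{M_{\mathrm{retr}}}$ and look for pairs of functionals $(B, N)$ with $\Delta(M) = N(M) + B(M)$ for every $M$ such that (i) $B(M)=0$ whenever $\varphi=\varphi^*(\theta;\mathcal{D}_r)$, and (ii) $N(\mathcal{R}^*(M)) = N(M)$ for every $M$.

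\textbf{Existence.} Identity~\eqref{eq:decomp} is a telescoping identity: add and subtract $\mathrm{LP}^{M_{\mathrm{cal}}}$. For (i), note that if $\varphi=\varphi^*(\theta;\mathcal{D}_r)$, then by definition $\mathcal{R}^*(M) = (\theta,\varphi^*(\theta;\mathcal{D}_r)) = M$. Hence $M_{\mathrm{cal}}=M$ and the BN-residual vanishes; this needs only the definition~\eqref{eq:recalop} and weight invariance, Theorem~\ref{thm:t2}(ii). For (ii), the NC-residual evaluated at $\mathcal{R}^*(M)$ is $\mathrm{LP}^{\mathcal{R}^*(\mathcal{R}^*(M))} - \mathrm{LP}^{M_{\mathrm{retr}}}$. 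By idempotence, Theorem~\ref{thm:t2}(i), this equals $\mathrm{LP}^{\mathcal{R}^*(M)} - \mathrm{LP}^{M_{\mathrm{retr}}}$. Two implicit conventions are needed here. First, $M_{\mathrm{retr}}$ is a fixed reference, not transformed by $\mathcal{R}^*$. Second, $\mathrm{LP}$ is a well-defined function of the forward map, so it is determined by $(\theta,\varphi)$ via Theorem~\ref{thm:t2}(iii).

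\textbf{Uniqueness.} Suppose $(B,N)$ is any admissible pair. Fix $M$ and apply the decomposition at the point $M_{\mathrm{cal}}=\mathcal{R}^*(M)$. Its running statistics are $\varphi^*(\theta;\mathcal{D}_r)$, and its weights are $\theta$ by Theorem~\ref{thm:t2}(ii), so (i) gives $B(M_{\mathrm{cal}})=0$. Therefore $N(M_{\mathrm{cal}}) = \Delta(M_{\mathrm{cal}}) = \mathrm{LP}^{M_{\mathrm{cal}}} - \mathrm{LP}^{M_{\mathrm{retr}}}$. Invariance (ii) then gives $N(M) = N(M_{\mathrm{cal}})$, which is exactly the stated NC-residual. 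Finally, $B(M) = \Delta(M) - N(M)$ is forced to be the stated BN-residual. Any two admissible decompositions therefore agree pointwise.

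\textbf{Main obstacle.} The main difficulty is not algebraic but definitional. Uniqueness requires the residuals to be functionals of $M$ alone, so we must state that hypothesis explicitly. We must also check that $\mathcal{R}^*(M)$ satisfies the hypothesis of (i), namely that its statistics equal $\varphi^*$ at its own, unchanged weights. The second point is precisely where weight invariance is essential. If $\theta$ could change, $\varphi^*(\theta;\mathcal{D}_r)$ would be computed at the wrong weights, and condition (i) would not apply to $M_{\mathrm{cal}}$. A further subtlety is that empirical LP involves training a probe. We would handle this by taking $\mathrm{LP}$ to be the population or deterministic-probe quantity, so that it is a genuine function of the forward map, and by deferring finite-sample effects to Theorem~\ref{thm:t2}(iv).
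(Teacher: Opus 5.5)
Your proposal is correct and follows essentially the same argument as the paper: existence via the telescoping identity, with (i) following from $\mathcal{R}^*(M)=M$ when $\varphi=\varphi^*(\theta;\mathcal{D}_r)$ and (ii) from idempotence, and uniqueness by evaluating an arbitrary admissible split at $M_{\mathrm{cal}}$ and transferring the result back to $M$ via invariance under $\mathcal{R}^*$. Your explicit conventions make precise what the paper's proof leaves implicit: the residuals are functionals of $M$ alone, $M_{\mathrm{retr}}$ is held fixed, and $\mathrm{LP}$ is a deterministic function of $(\theta,\varphi)$.
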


The proof (Appendix~\ref{app:t3-proof}) is short: existence is the algebraic identity $A - C = (A - B) + (B - C)$; uniqueness follows by evaluating any candidate split on $M_{\mathrm{cal}}$ and using the two criteria. Uniqueness matters because it precludes alternative attributions consistent with the same two criteria: any decomposition that respects ``BN-residual vanishes at $\varphi^*$'' and ``NC-residual is recalibration-invariant'' must coincide with Eq.~\eqref{eq:decomp}.

\begin{corollary}[BN measurement bias is identifiable independently of encoder geometry]
\label{cor:nc}
A nonzero BN-residual is a measurement bias attributable to the running statistics alone, not to feature--classifier misalignment in the sense of Neural Collapse. By Theorem~\ref{thm:t2} (weight invariance), $\mathcal{R}^*$ leaves encoder and classifier weights untouched, so the encoder-level forget-class structure is identical before and after recalibration; any LP change reflects a correction to the surface measurement of that fixed structure. A positive BN-residual indicates the surface LP \emph{overstates} encoder leakage; a negative BN-residual indicates the surface LP \emph{understates} it. The post-recalibration LP, $\mathrm{LP}^{M_{\mathrm{cal}}}$, is the unbiased measure of the encoder's residual forget-class linear separability.
\end{corollary}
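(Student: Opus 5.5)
The statement to prove is Corollary~\ref{cor:nc}. It is essentially a reading of Proposition~\ref{prop:t3} through Theorem~\ref{thm:t2}, so I would argue it in three short steps.

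\textbf{Step 1: the BN-residual depends only on $\varphi$.} By Theorem~\ref{thm:t2}(ii), $M=(\theta,\varphi)$ and $M_{\mathrm{cal}}=(\theta,\varphi^*(\theta;\mathcal{D}_r))$ share every weight. This includes conv kernels, BN affine parameters $\gamma,\beta$, and the classifier.

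Fix the LP protocol: probe training data, probe hypothesis class, optimizer, and seed. Then $\mathrm{LP}$ is a functional of the feature map $x\mapsto h(x;\theta,\varphi)$ alone. Write $\mathrm{LP}^{(\theta,\varphi)}=G(h(\cdot;\theta,\varphi))$.

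The BN-residual is then $G(h(\cdot;\theta,\varphi))-G(h(\cdot;\theta,\varphi^*))$. This is a difference between two evaluations that differ only in the $\varphi$ argument. It can therefore be written as $\Delta_\theta(\varphi)$, a function of the running statistics at fixed weights.

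Two consequences follow. First, by Proposition~\ref{prop:t3}(i), $\Delta_\theta(\varphi^*)=0$. Second, by idempotence (Theorem~\ref{thm:t2}(i)), $\Delta_\theta$ evaluated at $\mathcal{R}^*(M)$ is again zero. So any nonzero value is attributable to the discrepancy $\varphi\neq\varphi^*$ alone, as in Corollary~\ref{cor:t2}.

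\textbf{Step 2: why this is not Neural Collapse misalignment.} I would define NC-type feature--classifier misalignment as a property of the encoder geometry at the model's calibrated normalization. In other words, it is a property of $\theta$ evaluated at $\varphi^*(\theta)$.

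This property is identical for $M$ and $M_{\mathrm{cal}}$: both have the same $\theta$, and hence the same $\varphi^*(\theta)$. So it cannot account for a difference between them, and the BN-residual must come from the running statistics.

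This is exactly Proposition~\ref{prop:t3}(ii): the NC-residual is $\mathcal{R}^*$-invariant. By uniqueness in Proposition~\ref{prop:t3}, no recalibration-invariant quantity can absorb part of the BN-residual.

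\textbf{Step 3: the sign statements and the claim about $\mathrm{LP}^{M_{\mathrm{cal}}}$.}

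For the sign, I would rearrange Eq.~\eqref{eq:decomp} as $\mathrm{LP}^M=\mathrm{LP}^{M_{\mathrm{cal}}}+\mathrm{BN\text{-}residual}$. If $\mathrm{LP}^{M_{\mathrm{cal}}}$ is taken as the reference measurement of the fixed structure, then a positive residual means the surface LP overstates leakage, and a negative one means it understates it.

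For the claim that $\mathrm{LP}^{M_{\mathrm{cal}}}$ is the right measure, I would use Theorem~\ref{thm:t2}(iii). It shows that $M_{\mathrm{cal}}$ computes, pointwise, the features of $\theta$ under oracle population normalization on $\mathcal{D}_r$. That is the canonical normalization for the fixed weights, and it matches the regime in which $M_{\mathrm{retr}}$ is evaluated, since retraining also ends with calibrated statistics.

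\textbf{Main obstacle.} The hard part is the word \emph{unbiased}. Strictly, it is a definitional choice: it asserts that the encoder's leakage should be measured at $\varphi^*$. I would state that convention explicitly.

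I would then add that in practice $\widehat{\mathcal{R}}$ approximates it only up to $O_p(N_r^{-1/2})+O_L(b^{-1})$, by Theorem~\ref{thm:t2}(iv). Converting this into an LP error bound would need a continuity assumption on $G$, since probe accuracy is discontinuous. So I would phrase the final claim as exact for the population operator $\mathcal{R}^*$ and asymptotic for the empirical one.
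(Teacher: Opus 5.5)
Your proposal is correct and follows the paper's own argument. By weight invariance (Theorem~\ref{thm:t2}(ii)), $M$ and $M_{\mathrm{cal}}$ differ only in $\varphi$, so the BN-residual is attributable to the running statistics, and the sign statements are just a rearrangement of Eq.~\eqref{eq:decomp}.

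Defining NC misalignment as a property of $\theta$ at $\varphi^*(\theta)$, and flagging ``unbiased'' as a convention exact only for $\mathcal{R}^*$, makes explicit what the paper's one-line proof leaves implicit. One side remark should be softened: $M_{\mathrm{retr}}$ ends with calibrated statistics only approximately, since Retrain itself shows $\Delta\mathrm{LP}=-6.0$pp in Table~\ref{tab:cifar100-5class}.
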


The operational consequence is direction-dependent. The BN-residual on CIFAR-100 for GA, SalUn, and SSD is large and \emph{negative} (Table~\ref{tab:decomp-cifar}). BN measurement bias \emph{deflates} the pre-recalibration LP, hiding encoder-level leakage that recalibration reveals. The NC-residuals for the same methods are positive and substantial: the recalibrated LP exceeds retrain by $14$--$17$pp, indicating the encoder retains \emph{more} forget-class structure than retrain. The Neural Collapse story of~\citet{gao2026illusion} is consistent with our analysis on these methods, but their pre-recalibration measurement could not see that the BN measurement bias was masking the encoder-level severity by a magnitude that exceeds the (uncorrupted) NC signal itself.

We caution that an additive squared-energy (Pythagorean) form does not hold: an empirical cross-correlation analysis (Appendix~\ref{app:cross-corr}) finds mean $|\rho| = 0.40$ across train-mode methods, with maximum $0.68$, ruling out $\|\delta_W\|^2 + \|\delta_{\mathrm{BN}}\|^2 = \|\delta_W + \delta_{\mathrm{BN}}\|^2$ as a quantitative attribution principle. Eq.~\eqref{eq:decomp} is exact at the level of LP accuracies (an algebraic identity), not at the level of squared norms. The negative correlation in BN-corrupted methods (gradient ascent shifts BN running statistics toward forget while weight updates shift representations away from forget) is consistent with the two-mechanisms framing of \S\ref{sec:phenomena} but rules out additive squared-energy decomposition.

Together, weight invariance and the decomposition isolate two questions: how much of the measured leakage depends on normalization state, and how much remains after that state is recalibrated. We now examine these components across unlearning methods.

\section{Empirical Evaluation}
\label{sec:eval}

\paragraph{Setup.}
We evaluate $9$ unlearning methods on CIFAR-10 (single-class, two-class) and CIFAR-100 (single-class, five-class), with ResNet-18 (CIFAR adaptation: $3\times 3$ stem, no max-pool). Each unlearning method uses a single set of hyperparameters per dataset/forget-fraction condition, drawn from the reference implementation of each method or its closest available re-implementation; full configurations are in Appendix~\ref{app:hyperparams}. Linear probes use \texttt{sklearn.LogisticRegression(solver=`lbfgs', max\_iter=1000)} with $50$ samples per class. The probe-budget choice is biased: Adam-trained probes underestimate LP-50 by $14$--$22$pp for methods that genuinely modified the encoder (Appendix~\ref{app:lp-budget}), so we use lbfgs throughout. Relearning AUC is computed by fine-tuning on $50$ forget-class samples for $50$ epochs and integrating the normalized forget-accuracy curve. BN recalibration uses $N_r \geq 5000$ retain samples in train mode with gradients disabled. All multi-seed numbers are mean$\pm$std across three random seeds.

\subsection{Main diagnostic and self-consistency}
\label{sec:eval-main}

Across the ten rows of Table~\ref{tab:main-cifar10}, including Retrain, the association between pre-recalibration forget accuracy and leakage depends on the metric and test (Appendix~\ref{app:correlations}). Rank association with relearning is strong, but near-zero forget accuracy still leaves the relevant operating points unresolved. Among the four unlearning methods and Retrain with Pre-F $\leq 3\%$, relearning AUC spans $0.000$--$0.723$ and LP-50 spans $75.5\%$--$91.0\%$, against a retrain reference of $75.5\%$. Thus near-zero forget accuracy does not separate low-leakage checkpoints from residual leakage. After recalibration, association with relearning is $r=0.998$ (Pearson, $p<10^{-6}$), $\rho_{\mathrm{S}}=0.988$ (Spearman), and $\tau_{\mathrm{K}}=0.952$ (Kendall). Association with LP-50 is significant under all three tests ($p<0.005$).

Table~\ref{tab:main-cifar10} reports the diagnostic results (three seeds; mean$\pm$std). Six of nine methods exhibit $\Delta F \geq 18.9$pp, with the largest reversal of $+78$pp on BadTeacher: forget-class accuracy rises from $19\%$ to $97\%$ after a single weight-preserving forward pass.

\begin{table}[t]
\caption{Main diagnostic on CIFAR-10 (ResNet-18, single-class forget, class 0, three seeds; mean$\pm$std). $\Delta F = \mathrm{Post\text{-}F\%} - \mathrm{Pre\text{-}F\%}$ is the BN illusion magnitude, attributable to BN state by Corollary~\ref{cor:t2}. Pre-R / Post-R = retain accuracy before / after recalibration. PGU$^\dagger$ is our cleaner-covariance re-implementation; the official two-phase protocol yields $\Delta F = +14.5$pp (Appendix~\ref{app:pgu}).}
\label{tab:main-cifar10}
\centering
\small
\setlength{\tabcolsep}{3pt}
\resizebox{\linewidth}{!}{%
\begin{tabular}{lccccccccc}
\toprule
Method & BN & Ret-aw. & Pre-F\% & Post-F\% & $\Delta F$ & Pre-R\% & Post-R\% & LP-50 & ReAUC \\
\midrule
Retrain          & train & \checkmark & $0.0\pm0.0$  & $0.0\pm0.0$   & $0.0\pm0.0$  & $99.6$ & $99.6$ & $75.5\pm0.7$ & $0.000$ \\
SCRUB            & train & \checkmark & $0.0\pm0.0$  & $0.0\pm0.0$   & $0.0\pm0.0$  & $88.2$ & $90.9$ & $82.1\pm0.0$ & $0.000$ \\
GA               & train & $-$        & $0.0\pm0.0$  & $0.0\pm0.0$   & $0.0\pm0.0$  & $57.2$ & $81.5$ & $79.4\pm0.2$ & $0.000$ \\
GA+FT            & train & \checkmark & $67.9\pm0.5$ & $67.7\pm0.3$  & $-0.2\pm0.2$ & $96.3$ & $96.3$ & $89.7\pm0.0$ & $0.746$ \\
SalUn            & train & $-$        & $3.0\pm0.1$  & $65.1\pm0.3$  & $\mathbf{+62.1\pm0.3}$ & $85.0$ & $95.1$ & $88.9\pm0.1$ & $0.723$ \\
BadTeacher       & eval  & \checkmark & $18.9\pm6.7$ & $97.2\pm0.1$  & $\mathbf{+78.2\pm6.7}$ & $95.7$ & $95.9$ & $91.4\pm0.1$ & $0.976$ \\
IncompTeacher    & train & \checkmark & $39.1\pm22.6$& $87.1\pm11.7$ & $\mathbf{+48.0\pm11.4}$ & $90.1$ & $89.1$ & $89.8\pm0.4$ & $0.928$ \\
NoiseInject$^*$ & train & $-$        & $79.8\pm0.5$ & $98.7\pm0.0$  & $\mathbf{+18.9\pm0.5}$ & $95.0$ & $95.2$ & $91.1\pm0.0$ & $0.988$ \\
SSD              & train & $-$        & $25.7\pm3.7$ & $88.1\pm1.0$  & $\mathbf{+62.4\pm4.2}$ & $79.1$ & $95.7$ & $90.3\pm0.3$ & $0.901$ \\
PGU$^\dagger$    & eval  & $-$        & $1.1\pm0.0$  & $62.0\pm0.7$  & $\mathbf{+60.9\pm0.7}$ & $93.0$ & $94.6$ & $91.0\pm0.0$ & $0.641$ \\
\bottomrule
\end{tabular}
}\\[2pt]
{\footnotesize $^*$NoiseInject is a simplified noise-injection baseline in the style of~\citet{chundawat2023can}, not the Fisher-forgetting method of~\citet{golatkar2020eternal}. Ret-aw. = retain-aware objective or training stage; ReAUC = relearning AUC.}
\end{table}

The four rows with $|\Delta F| \leq 0.2$pp split three ways. SCRUB reaches $\Delta F=0$ via genuine erasure with retain preserved at $88\%$. GA reaches $\Delta F=0$ at $500$ steps by destroying retain (Pre-R $=57.2\%$, the lowest in the table); at fewer steps, $\Delta F$ traces a smooth phase transition from $+76$pp at $100$ steps to $0$pp at $500$ steps (Appendix~\ref{app:ga-sweep}). GA+FT's fine-tune step self-recalibrates BN. PGU (despite eval mode) exhibits $\Delta F \approx 61$pp under our re-implementation: projection controls \emph{which} directions weights move, not the \emph{magnitude} of activation drift relative to frozen running statistics, and once weights move enough to depress Pre-F\% to near zero, BN misalignment is fully exposed (Appendix~\ref{app:pgu}). \textbf{Pre-R\% reveals which $\Delta F = 0$ rows correspond to viable forgetting:} Retrain, SCRUB, and GA+FT preserve retain; GA does not, a distinction invisible from $\Delta F$ alone.

At fixed batch composition, a $3\times 3$ learning-rate/step-count grid on BadTeacher and SalUn (Appendix~\ref{app:sweep}) gives $\Delta F\geq57.8$pp for every configuration preserving retain accuracy at $\geq85\%$. The GA step-count sweep (Appendix~\ref{app:ga-sweep}) likewise links smaller gaps to degraded retain utility. Across the main comparison, the diagnostic reverses apparent forgetting in six of nine methods by $19$--$78$ percentage points; retain accuracy and the auxiliary leakage measures identify the different zero-gap regimes.

\subsection{2D decomposition: BN- vs.\ NC-residual}
\label{sec:eval-2d}

Table~\ref{tab:decomp-cifar} instantiates Eq.~\eqref{eq:decomp} on both datasets. The BN-residual is the pre-minus-post LP difference, $-\Delta\mathrm{LP}$, with weights fixed; its sign indicates whether BN measurement bias inflates ($+$) or deflates ($-$) the surface LP. The NC-residual is the post-recalibration LP elevation over retrain and is attributable to the encoder.

\begin{table}[t]
\caption{2D LP-elevation decomposition (single seed): NC-residual = $\mathrm{LP}^{M_{\mathrm{cal}}} - \mathrm{LP}^{M_{\mathrm{retr}}}$, BN-residual = $\mathrm{LP}^M - \mathrm{LP}^{M_{\mathrm{cal}}}$. CIFAR-10: $\mathrm{LP}^{M_{\mathrm{retr}}} = 76.10$. CIFAR-100: $\mathrm{LP}^{M_{\mathrm{retr}}} = 73.33$. The CIFAR-100 column for GA, SalUn, and SSD shows large \emph{negative} BN-residuals: BN deflates the pre-recalibration LP, masking the substantial positive NC-residual. Separately rounded entries can differ by $0.01$pp when subtracted.}
\label{tab:decomp-cifar}
\centering
\small
\begin{tabular}{lcccc cccc}
\toprule
& \multicolumn{4}{c}{CIFAR-10} & \multicolumn{4}{c}{CIFAR-100} \\
\cmidrule(lr){2-5} \cmidrule(lr){6-9}
Method & $\mathrm{LP}^M$ & $\mathrm{LP}^{M_{\mathrm{cal}}}$ & NC-res & BN-res & $\mathrm{LP}^M$ & $\mathrm{LP}^{M_{\mathrm{cal}}}$ & NC-res & BN-res \\
\midrule
Retrain         & 76.10 & 76.10 & 0.00  & 0.00  & 73.33 & 73.33 & 0.00  & 0.00  \\
SCRUB           & 79.23 & 79.30 & $+3.20$ & $-0.07$ & 81.67 & 86.67 & $+13.33$ & $-5.00$ \\
GA              & 67.67 & 77.93 & $+1.83$ & $-10.27$ & 58.00 & 87.67 & $+14.33$ & $-29.67$ \\
SalUn           & 77.47 & 86.23 & $+10.13$ & $-8.77$ & 60.67 & 88.00 & $+14.67$ & $-27.33$ \\
SSD             & 82.33 & 88.37 & $+12.27$ & $-6.03$ & 70.00 & 90.67 & $+17.33$ & $-20.67$ \\
GA+FT           & 89.60 & 87.30 & $+11.20$ & $+2.30$ & 88.67 & 88.67 & $+15.33$ & $0.00$ \\
BadTeacher      & 89.27 & 88.90 & $+12.80$ & $+0.37$ & 82.00 & 89.00 & $+15.67$ & $-7.00$ \\
NoiseInject    & 89.97 & 89.07 & $+12.97$ & $+0.90$ & 83.67 & 86.33 & $+13.00$ & $-2.67$ \\
IncompTeacher   & 90.63 & 87.57 & $+11.47$ & $+3.07$ & 87.00 & 90.00 & $+16.67$ & $-3.00$ \\
PGU             & 89.10 & 88.33 & $+12.23$ & $+0.77$ & 93.67 & 94.33 & $+21.00$ & $-0.66$ \\
\bottomrule
\end{tabular}
\end{table}

The empirical content of Corollary~\ref{cor:nc} is now visible. GA, SalUn, and SSD on CIFAR-100 sit far in the BN-deflation quadrant: $|\mathrm{BN\text{-}residual}| \approx 20$--$30$pp, dwarfing the $|\mathrm{NC\text{-}residual}| \approx 14$--$17$pp. The pre-recalibration LP for these methods sits \emph{below} the retrain reference; a reader looking only at the surface LP would conclude these methods stripped forget-class linear separability below the level of a model that never saw the class. The post-recalibration LP exceeds retrain by $14$--$17$pp, indicating the encoder retains substantially more forget-class structure than retrain, hidden by BN measurement bias from concurrent work~\citep{gao2026illusion}. The Neural Collapse story is therefore strengthened, not refuted: encoder-level severity is larger than concurrent work could measure, but requires recalibration to expose. For methods with small BN-residual on CIFAR-10 (BadTeacher, NoiseInject, IncompTeacher, PGU), the NC-residual is the dominant LP-elevation component and the surface LP only slightly overstates it; the qualitative attribution to encoder geometry is unchanged.

\paragraph{Quadrant geometry.}
The $9$ methods cluster into four regions of the (BN-residual, NC-residual) plane. \textbf{(I) Origin (genuine forgetting):} only Retrain occupies this region with both residuals at zero. \textbf{(II) Positive BN-residual, positive NC-residual:} CIFAR-10 BadTeacher, NoiseInject, IncompTeacher, GA+FT, and PGU; the surface LP overstates encoder leakage but is qualitatively in the right direction. \textbf{(III) Negative BN-residual, positive NC-residual (BN-deflation):} CIFAR-100 GA, SalUn, SSD; the surface LP \emph{understates} encoder leakage, sometimes by a factor that reverses the sign relative to retrain. This is the regime in which the BN illusion is most dangerous to interpret, because the surface measurement looks safer than retrain. \textbf{(IV) Sign-flipped:} no method we tested falls here, but the framework predicts a method could in principle have positive BN-residual and negative NC-residual (the surface LP overstates encoder leakage, which is itself below retrain). The four-region picture is what the unique decomposition of Proposition~\ref{prop:t3} actually buys: a rigorous taxonomy of how surface-level LP departs from encoder-level LP, applicable to any future method evaluated on a BN backbone.

The decomposition therefore distinguishes normalization that masks leakage from normalization that inflates it, while the post-recalibration residual measures elevation relative to retrain.

\paragraph{Cross-dataset replication on CIFAR-100.}
We write $\Delta\mathrm{LP}$ and $\Delta\mathrm{NCC}$ for the post-minus-pre changes in linear-probe and nearest-class-mean accuracy. We replicate the diagnostic on CIFAR-100 in single-class ($p=0.01$) and five-class ($p=0.05$) forget conditions (Appendix~\ref{app:cifar100}). Two findings. First, $|\Delta\mathrm{LP}|$ scales inversely with forget fraction $p$: for GA, CIFAR-100 1-class $\Delta\mathrm{LP}{=}{+}29.7$pp drops to $+2.1$pp at 5-class; CIFAR-10 1-class ($p{=}0.10$) is $+11.9$pp, 2-class ($p{=}0.20$) is $+5.0$pp, consistent with EMA absorption arithmetic. Second, only Retrain on CIFAR-100 5-class achieves negative $\Delta\mathrm{LP}$ ($-6.0$pp) and negative $\Delta\mathrm{NCC}$ ($-6.9$pp). Negative $\Delta\mathrm{LP}$ post-recalibration is consistent with genuine forgetting, but it is a coarse indicator and not a reliable diagnostic on its own: SCRUB, which exhibits genuine forgetting by every other measure, produces $\Delta\mathrm{LP} > 0$ in three of four CIFAR conditions we test.

\subsection{Threat model}
\label{sec:threat}
\label{sec:threat-model}

The adversary's objective is to restore forget-class capability, not to reconstruct training examples. The adversary holds the released checkpoint (weights and BN running statistics), has at least ten unlabeled natural images, and can run the model locally. The attack requires neither labels nor gradient updates nor access to the original model or designated forget set. We measure success by post-recalibration forget-class accuracy and its change from the released checkpoint; these are the quantities reported in Section~\ref{sec:eval-adv}. Membership inference is evaluated separately in Appendix~\ref{app:mia}.

\begin{center}
\small
\setlength{\tabcolsep}{4pt}
\renewcommand{\arraystretch}{1.12}
\begin{tabular}{@{}p{0.17\linewidth}p{0.25\linewidth}p{0.29\linewidth}p{0.21\linewidth}@{}}
\toprule
Setting & Actor's access & Recalibration & Relation to model release \\
\midrule
Model release & Checkpoint and local execution & Directly applies the recovery pass & Primary attack setting \\
Hosted API & Inference queries only & Cannot update BN statistics through the interface & Disclosure of the checkpoint enables the primary attack \\
Auditor / regulator & Checkpoint and retain data & Runs the same pass as an evaluation control & Same operation, different purpose \\
\bottomrule
\end{tabular}
\end{center}
The recovery procedure acts directly on a locally accessible checkpoint. An inference-only interface blocks this procedure, while an auditor uses it to test whether low forget accuracy survives normalization-state correction.

\subsection{Adversarial recovery: the artifact is exploitable}
\label{sec:eval-adv}

The diagnostic results so far are evaluation-side. The same operation, run by an attacker in the model-release setting, becomes a recovery procedure. We vary recalibration data along two axes: (i) distribution: original retain set (auditor view), CIFAR-10 test (in-distribution adversary), or CIFAR-100 raw pixels with CIFAR-10 normalization (out-of-distribution adversary); (ii) budget $B \in \{10, 25, 50, 100, 500, 1000, 2500, 5000, 9000\}$ with three random seeds. The attacker's procedure is identical to the diagnostic: a single forward pass with running statistics updated and weights frozen. No labels, no gradients, no model queries beyond the forward sweep.

\begin{figure}[t]
\centering
\includegraphics[width=1.0\linewidth]{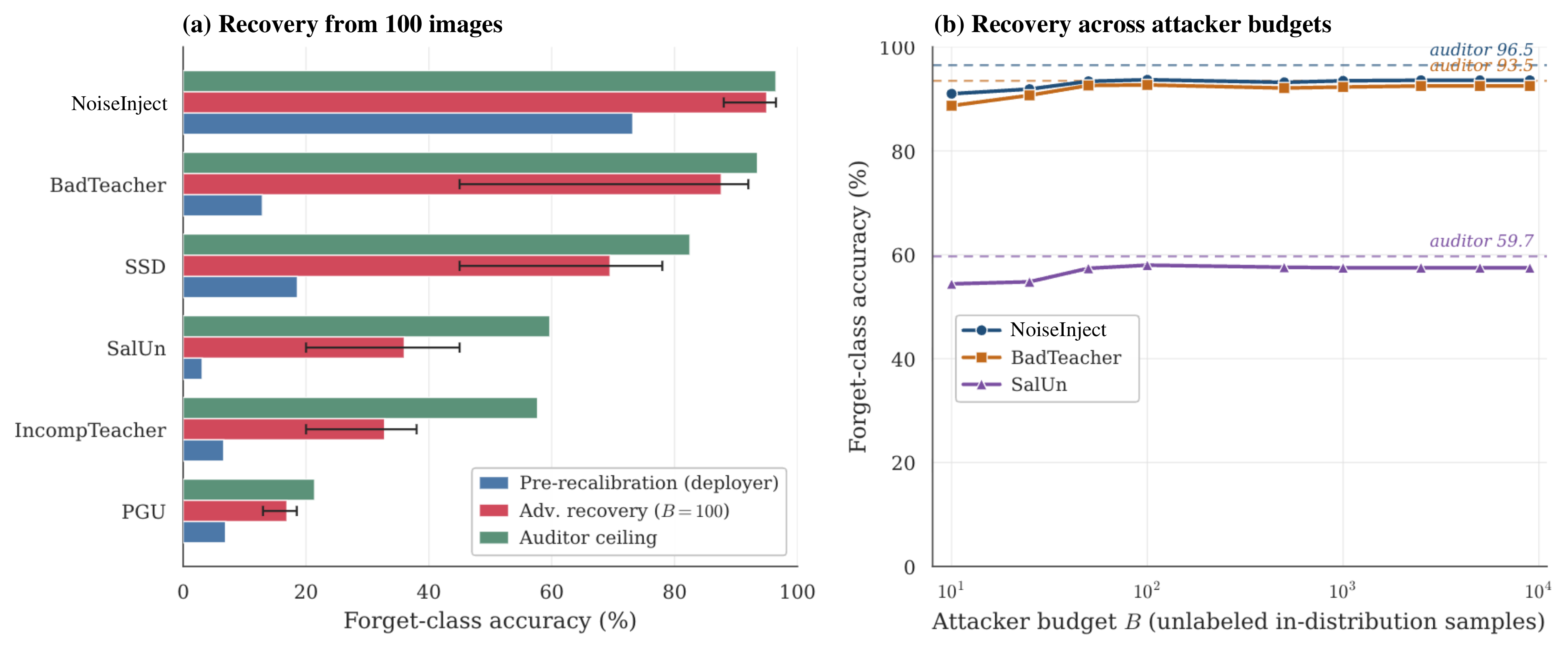}
\caption{Adversarial recovery on BN-vulnerable methods (CIFAR-10, ResNet-18). \textbf{(a)} Forget-class accuracy before recalibration (blue), after recovery using $B=100$ in-distribution unlabeled images (red; three-seed mean and seed-range error bars), and after full-retain auditor recalibration (green). \textbf{(b)} Forget-class accuracy versus attacker budget $B$ for three representative methods; dashed horizontal lines mark their auditor references. Substantial recovery is already visible at $B=10$.}
\label{fig:adv-recovery}
\end{figure}

\paragraph{Findings.}
At $B=100$, the in-distribution attack substantially restores forget-class accuracy across the vulnerable methods in Figure~\ref{fig:adv-recovery}a. The sub-100-budget results in Appendix~\ref{app:sub100} show that BadTeacher and NoiseInject reach $88.7\%$ and $91.0\%$ forget-class accuracy at $B=10$. SalUn reaches $54.4\%$ at $B=10$ and $58.0\%$ at $B=100$. These are absolute accuracies, rather than fractions of the original model's accuracy. At $B=100$, recovery with CIFAR-100 images under CIFAR-10 normalization reaches $78\%$--$95\%$ of the corresponding in-distribution accuracy (Appendix~\ref{app:ood-recovery}). Thus a ten-image budget already exposes the artifact in the tested model-release setting.

\subsection{Mechanistic falsification and scope}
\label{sec:eval-gn}

GroupNorm has no running statistics, so the recalibration analogue must be a no-op. We retrain ResNet-18 on CIFAR-10 with all \texttt{BatchNorm2d} replaced by \texttt{GroupNorm}, and re-run eight unlearning methods plus Retrain (PGU omitted on GN due to compute). $\Delta F$ collapses to exactly $0.00$pp uniformly (Appendix~\ref{app:hyperparams}, Table~\ref{tab:gn-control}), falsifying any architecture-, dataset-, or hyperparameter-level confound. Relearn-AUC remains uniformly high ($95\%$--$98\%$) on GN-ResNet; the measurement artifact disappears, but encoder-level failure~\citep{gao2026illusion,jeon2026erase} is normalization-independent. The artifact also transfers to ResNet-50 on Tiny-ImageNet: BadTeacher $\Delta F{=}{+}82.2$pp (vs.\ $+78.2$ on CIFAR-10), PGU $+29.6$pp, with the GA step-count phase transition replicating with the cliff edge shifted earlier by 200-class competitive pressure (Appendix~\ref{app:scale}). On ViT-S/16 (LayerNorm) the recalibration operator is the identity by Theorem~\ref{thm:t2}; we verify $\Delta F{=}0.00$pp across $5$ methods tuned into a forgetting regime (Appendix~\ref{app:vit}), confirming the artifact is normalization-specific. At ImageNet resolution, a pretrained ResNet-50 on ImageNet-100 gives BadTeacher $\Delta F=+92.0$pp with retain accuracy preserved, and ten-image recovery reaches $82.0\%$ forget accuracy (Appendix~\ref{app:imagenet}). Together, the controls isolate the recalibratable BN state, while the larger-image experiments demonstrate that the recovery effect extends beyond CIFAR.

\subsection{Discussion}
\label{sec:discussion}

\paragraph{Tuning at fixed batch composition.} In the tested $3\times3$ learning-rate/step-count grid, every BadTeacher or SalUn configuration preserving retain accuracy at $\geq85\%$ retains a recalibration gap of at least $57.8$pp (Appendix~\ref{app:sweep}). This result concerns that grid and fixed batch composition. Changing the retain fraction is a separate intervention: retain-mixed batches mitigate SalUn's corruption component in Appendix~\ref{app:batch-mixing}.

\paragraph{Why methods resist, and the corresponding defenses.} A small $\Delta F$ has different causes. SCRUB combines a zero gap with low relearning susceptibility and $88\%$ retain accuracy. GA reaches a zero gap at 500 steps with substantially degraded retain accuracy; Appendix~\ref{app:ga-sweep} shows the transition within the same method. GA+FT instead self-recalibrates during retain fine-tuning, giving $\Delta F=-0.2$pp. This suggests three controls. Architecturally, GN/LN remove the running statistics responsible for the artifact. Procedurally, freezing BN during unlearning prevents corruption, while retain recalibration corrects any remaining weight--normalization misalignment before evaluation. At the data level, a retain fraction of at least $25\%$ removes the observed SalUn gap in the tested mixed-batch sweep. This mixing result is specific to the evaluated pipeline; BadTeacher's eval-mode misalignment persists at its reference mixing ratio.

\paragraph{Three-regime decomposition of $\Delta F = 0$ at scale.} At CIFAR-10 scale, $\Delta F = 0$ admitted a clean reading. At Tiny-ImageNet scale (Appendix~\ref{app:scale}) it splits three ways: \emph{genuine forgetting with retain preserved} (SCRUB; high breakthrough epoch in relearning); \emph{encoder-level failure} where the head suppresses the forget class but the encoder does not move (GA+FT, SSD: high LP, high relearn-AUC); and \emph{saturated weight erasure with destroyed retain} (GA, IncompTeacher: $\leq 3\%$ retain). The auxiliary columns (Pre-R, LP, ReAUC) are required to distinguish these.

\paragraph{Practical recommendations and scope.} Deployers with retain access should recalibrate post-unlearning and serve the recalibrated model; without retain access, any unlabeled natural-image proxy fixes most of the artifact. New methods should prefer GN/LN architectures or freeze BN and recalibrate before evaluation; benchmarks should require pre/post-recalibration reporting alongside encoder-level checks~\citep{gao2026illusion,jeon2026erase}. Post-recalibration $F\%{=}0$ is not a privacy guarantee; pairing the diagnostic with MIA, MIA-NN, or UMA-style verifiers yields a more trustworthy evaluation.

\paragraph{Composition with concurrent encoder-level diagnostics.} The decomposition in Section~\ref{sec:diagnostic} positions BN recalibration as a prerequisite for, not a competitor to, encoder-level evaluation. Linear probing~\citep{gao2026illusion}, nearest-class-mean (NCC) accuracy, and backbone-freezing classifier retraining~\citep{jeon2026erase} all measure properties of the encoder, properties that, on BN-based architectures, are obscured by a measurement bias they were not designed to control for. Reporting $\mathrm{LP}^M$ alone risks the kind of sign-error illustrated in Table~\ref{tab:decomp-cifar} for GA/SalUn/SSD on CIFAR-100, where the surface measurement places these methods \emph{below} retrain on linear separability while the corrected measurement places them above. The remedy is to report $\mathrm{LP}^{M_{\mathrm{cal}}}$ or, when the BN-residual is itself of interest, both. A small additional cost (one forward sweep) buys a quantitative reading of an effect that is otherwise invisible.

\paragraph{What the diagnostic does and does not establish.} The diagnostic does not establish that any of the methods we evaluate are wrong, or that their authors made an error in their own evaluation protocols; it establishes that the standard surface-metric reading of forget accuracy on BN architectures is unreliable, and that the apparent forgetting in six of nine methods is recoverable by a transformation that modifies no weight. Whether a given method ``forgets'' is a question about the encoder; the diagnostic reframes it as a question about the recalibrated checkpoint, $M_{\mathrm{cal}}$, which is the unique representative of the model's residual forget-class capability under the unique decomposition of Proposition~\ref{prop:t3}. Honest evaluation requires reporting $\Delta F$ alongside Pre-R and Post-R. The joint value distinguishes genuine forgetting (Pre-R high, $\Delta F = 0$) from saturated weight erasure (Pre-R low, $\Delta F = 0$) from the BN illusion (Pre-R high, $\Delta F$ large) in a way no single column can.

\paragraph{Limitations.} We evaluate classification benchmarks (CIFAR-10/100, Tiny-ImageNet, and ImageNet-100), primarily with class-level forget sets. Instance-wise unlearning, with forget samples drawn randomly across all classes, is evaluated in one condition (Appendix~\ref{app:instance}). Full ImageNet-1k evaluation was not feasible within the available compute budget. We do not benchmark on language models, where the analogue of BatchNorm is LayerNorm and Theorem~\ref{thm:t2} predicts the diagnostic is the identity (the ViT-S/16 result of Appendix~\ref{app:vit} confirms this for vision LayerNorm; we do not test text). The threat model assumes local checkpoint access; defending against an attacker with checkpoint access requires architectural changes (GN/LN) rather than evaluation-side controls.

\paragraph{Why the artifact has gone undocumented.} The recalibration operation is standard practice in test-time adaptation~\citep{schneider2020improving,nado2020evaluating} and SWA~\citep{izmailov2018averaging}; the mechanism is well-known. We attribute the absence of recalibration as an unlearning evaluation control to three factors. First, retain-set BN recalibration is not commonly listed as a step in the standard unlearning pipeline; authors evaluate on the model produced by the method, exactly as released. Second, the artifact is direction-specific: BN corruption \emph{lowers} forget accuracy on the surface, which looks like the desired outcome rather than the opposite of it. Third, the most-cited unlearning benchmarks~\citep{triantafillou2024unlearning,cadet2024deepunlearn,huang2025mubox} score methods on metrics that, on BN backbones, the standard recalibration would shift; in the absence of an explicit control, one cannot tell whether a high-scoring method is genuinely forgetting or successfully exploiting the artifact. Our diagnostic adds a single line of code per evaluated checkpoint and recovers the missing control. The cost is one forward pass with gradients disabled; the value is a quantitative, weight-preserving, mechanistically grounded reading of an effect that previously had no name.

\section{Conclusion}
\label{sec:conclusion}

A single forward pass over retain data, modifying no weight, reverses the apparent forgetting in six of nine evaluated unlearning methods, is exploitable by an attacker with as few as ten unlabeled images, and is absent under GroupNorm. The operation is a deterministic, weight-preserving fixed-point on the model's normalization state, so any pre/post gap is provably attributable to BatchNorm rather than to weights. Headline forgetting numbers on BN architectures should be re-reported with the diagnostic applied; the artifact is mechanistically distinct from, and complementary to, the encoder-level failures concurrent work has documented, with our decomposition revealing that BN measurement bias \emph{masks} encoder leakage rather than creating it for several methods. Both normalization-state and encoder-level controls are therefore needed. The membership-inference experiment reinforces the metric-specific scope: recalibration changes attack AUC by at most $0.012$, despite forget-accuracy changes of up to approximately $98$pp on those checkpoints (Appendix~\ref{app:mia}). The diagnostic is one function call. We name the effect the \emph{BN illusion}. Future BN-based unlearning evaluations should report $\Delta F$ alongside Pre-R and Post-R as standard practice; the recalibrated checkpoint $M_{\mathrm{cal}}$ is the unbiased reading of the model's residual forget-class capability.

\bibliographystyle{plainnat}
\bibliography{references}

\newpage
\appendix

\section{BN Corruption: Formal Statement and Proof}
\label{app:t1}

In training mode, batch statistics update the running buffers during the forward pass. An operation applied subsequently to weight gradients does not intercept this update. The quantitative effect depends on the separation of the activation moments, rather than on class labels alone.

\begin{theorem}[BN corruption / non-interceptability]
\label{thm:t1}
Let $u_t$ be the running mean at a BN layer. Assume \textbf{(A1)} that the forget and retain pre-normalization mean vectors at the weights under consideration satisfy
\begin{equation}
 d_\ell := \frac{\|\mu_f^\ell-\mu_r^\ell\|_2}{\|\sigma_r^\ell\|_2}
 \geq \delta>0, \qquad \|\sigma_r^\ell\|_2>0,
 \label{eq:activation-separation}
\end{equation}
and \textbf{(A2)} that BN runs in training mode with EMA momentum $m\in(0,1]$. Every unlearning-batch forward pass executes
$u_{t+1}=(1-m)u_t+m\hat\mu_t$, independently of any operator applied only to weight gradients. If, additionally, the activation-moment targets remain fixed, the batches have expected mean $\mu_f^\ell$, and $u_0=\mu_r^\ell$, then
\begin{equation}
 \frac{\|\mathbb E u_T-\mu_r^\ell\|_2}{\|\sigma_r^\ell\|_2}
 = [1-(1-m)^T]d_\ell
 \geq [1-(1-m)^T]\delta.
 \label{eq:stationary-ema}
\end{equation}
This bound concerns normalization-state drift, not a lower bound on forget accuracy.
\end{theorem}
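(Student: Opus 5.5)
The proof has two parts of different character: a structural non-interceptability claim, then a closed-form solution of a linear recursion in expectation.

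\emph{Non-interceptability.} I would model one unlearning step as the composition of a forward pass $F$ and a weight update $U$. In train mode, $F$ maps $(\theta_t,\varphi_t,\text{batch})$ to the loss and to updated buffers $\varphi_{t+1}$ via Eq.~\eqref{eq:ema}. The buffers are registered outside the autograd graph: $\hat\mu_t$ is computed from the activations, but the assignment to $u$ is a non-differentiable in-place side effect. The update $U$ takes the gradient $\nabla_\theta \mathcal{L}$, applies an arbitrary operator $P$ (projection, masking, saliency), and returns $\theta_{t+1}$. Since $U$ has signature $(\theta_t, P(\nabla_\theta\mathcal{L}))\mapsto\theta_{t+1}$ and never reads or writes $\varphi$, and since $\varphi_{t+1}$ is fixed before $U$ is invoked, no choice of $P$ changes $u_{t+1}=(1-m)u_t+m\hat\mu_t$. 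The argument is essentially causal ordering. The only care needed is to state explicitly that ``operator applied only to weight gradients'' means a map whose input and output live in gradient space, so it cannot act on $\varphi$.

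\emph{Drift bound.} Under the additional hypotheses, I would take expectations. By linearity, $\mathbb E u_{t+1}=(1-m)\mathbb E u_t+m\mu_f^\ell$, because $\mathbb E\hat\mu_t=\mu_f^\ell$ and, by the fixed-target assumption, $\mu_f^\ell$ does not depend on $t$. Setting $e_t=\mathbb E u_t-\mu_f^\ell$ gives $e_{t+1}=(1-m)e_t$, hence $e_T=(1-m)^T(\mu_r^\ell-\mu_f^\ell)$. Then
\[
\mathbb E u_T-\mu_r^\ell=(\mathbb E u_T-\mu_f^\ell)+(\mu_f^\ell-\mu_r^\ell)=[1-(1-m)^T](\mu_f^\ell-\mu_r^\ell).
\]
Taking norms and dividing by $\|\sigma_r^\ell\|_2>0$ yields the equality with $d_\ell$. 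Since $m\in(0,1]$, the factor $1-(1-m)^T$ lies in $[0,1]$ and is nonnegative. Combining this with (A1), $d_\ell\ge\delta$, gives the inequality.

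\emph{Main obstacle.} The computation is routine; the delicate step is justifying the expectation recursion. The batch mean $\hat\mu_t$ depends on $\theta_t$, which is random and correlated with past batches. I would handle this through the stated fixed-target assumption, which I read as $\mathbb E[\hat\mu_t\mid\mathcal F_{t-1}]=\mu_f^\ell$ for the filtration $\mathcal F_{t-1}$ of the past. The tower property then gives the recursion without any independence requirement. I would also remark that an analogous recursion holds for the variance buffer, but it is not needed here. Finally, I would note, as the statement does, that the conclusion is about normalization state only and carries no claim about accuracy.
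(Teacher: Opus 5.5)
Your proposal is correct and follows essentially the paper's argument: the paper unrolls the EMA to $u_T=(1-m)^Tu_0+m\sum_{t=0}^{T-1}(1-m)^{T-1-t}\hat\mu_t$ and takes expectations, which is equivalent to your one-step expected recursion, and it justifies non-interceptability by the same causal-ordering observation that the buffer update is already executed in the forward pass before any gradient operator acts. The filtration and tower-property step you flag as the main obstacle is not needed: linearity of expectation requires only $\mathbb E\hat\mu_t=\mu_f^\ell$ for each $t$, with no independence or conditional-mean assumption.
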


\begin{proof}
Unrolling the forward-pass update gives
\begin{equation}
 u_T=(1-m)^Tu_0+m\sum_{t=0}^{T-1}(1-m)^{T-1-t}\hat\mu_t.
 \label{eq:ema-unroll}
\end{equation}
Under the fixed-target assumptions, take expectations and use
$m\sum_{t=0}^{T-1}(1-m)^{T-1-t}=1-(1-m)^T$ to obtain
$\mathbb E u_T-\mu_r^\ell=[1-(1-m)^T](\mu_f^\ell-\mu_r^\ell)$.
Equation~\eqref{eq:stationary-ema} follows from (A1). The EMA update uses the current batch statistics, not the weight gradients. A gradient projection can alter later weights and hence later activation distributions, but cannot prevent the buffer update already executed by the forward pass.
\end{proof}

\paragraph{Changing weights and moving targets.}
For an arbitrary weight trajectory, let $a_t$ be the conditional expected mean of the current unlearning batch, $q_t$ the contemporaneous retain mean, and $\xi_t=\hat\mu_t-a_t$. Writing $w_t=(1-m)^{T-1-t}$ gives the exact identity
\begin{equation}
\begin{aligned}
 u_T-q_T={}&(1-m)^T(u_0-q_0)
       +m\sum_{t=0}^{T-1}w_t(a_t-q_t)\\
       &+\sum_{t=0}^{T-1}w_t(q_t-q_{t+1})
       +m\sum_{t=0}^{T-1}w_t\xi_t.
\end{aligned}
\label{eq:moving-ema}
\end{equation}
The terms separate initial mismatch, activation separation, movement of the retain target, and batch noise. A cumulative lower bound for changing weights requires control of target motion and cancellation between the separation vectors. The forward-pass non-interceptability statement itself holds throughout the trajectory. Appendix~\ref{app:divergence} measures activation separation at the pre-unlearning weights.

\paragraph{Mixed batches and loss independence.}
For a mixed loader, $a_t$ in Eq.~\eqref{eq:moving-ema} is the actual pre-BN mean induced by that loader. When the feature map before the layer is fixed independently of batch composition, a retain fraction $r_{\mathrm{mix}}$ gives
$a_{\mathrm{mix}}=(1-r_{\mathrm{mix}})\mu_f+r_{\mathrm{mix}}\mu_r$, reducing the mean separation by the factor $1-r_{\mathrm{mix}}$. Deeper train-mode BN layers can themselves change this feature map through earlier batch normalizations. The mixed-batch experiment in Appendix~\ref{app:batch-mixing} therefore measures the resulting effect directly. The update does not depend on which loss generated the gradients: the structural statement applies to gradient-ascent and data-based procedures alike, including RandomLabeling (Appendix~\ref{app:random-label}).

\paragraph{Limits of the static-target quantitative prediction.}
A quasi-stationary approximation predicts that relative excess BN drift, normalized by the natural retain offset, scales as
$[1-(1-m)^T](1-p)/p$ when weight-induced activation drift is small. Across five CIFAR-10/100 conditions, the fitted slope relating this prediction to measured drift was $0.019$, compared with the identity-prediction slope of $1.0$. The approximation did not provide a useful quantitative prediction: during unlearning, weight updates change the activation moments that the EMA tracks, as Eq.~\eqref{eq:moving-ema} makes explicit. We retain the structural mechanism but do not use this static-target approximation to predict drift magnitudes. Recalibration in Theorem~\ref{thm:t2} is applied after unlearning, at fixed weights, so its idempotence concerns a different operation.

\section{Recalibration: Proof, Finite-Batch Error, and Layerwise Measurements}
\label{app:t2-proof}

\subsection{Population properties and empirical idempotence}
\label{app:population-proof}

\begin{proof}[Proof of Theorem~\ref{thm:t2}(i)--(iii)]
\textbf{(i)} The state $\varphi^*(\theta;\mathcal D_r)$ depends only on the weights and retain distribution. Since $\mathcal R^*$ leaves $\theta$ unchanged, a second application returns the same state.
\textbf{(ii)} Weight invariance follows from Eq.~\eqref{eq:recalop}. Pre-BN activations at the first layer are unchanged. At later layers they depend on the updated normalization of earlier layers and may change; the attribution uses weight invariance, not activation invariance.
\textbf{(iii)} At each BN layer, eval-mode normalization with its oracle moments computes
$y_c=\gamma_c(h_c-\mu_c^*)/\sqrt{v_c^*+\epsilon}+\beta_c$.
Starting from the first layer and propagating through the network, this is exactly the oracle-normalized forward map at the fixed weights.
\end{proof}

The empirical pass resets the running statistics and uses train-mode batch statistics for forward normalization. For a fixed sequence of input tensors and batches $\mathcal B$, deterministic computation therefore produces a state $\widehat\varphi_{\mathcal B}(\theta)$ independent of the incoming buffers:
\begin{equation}
 \widehat{\mathcal R}_{\mathcal B}(\theta,\varphi)
  =(\theta,\widehat\varphi_{\mathcal B}(\theta)),
 \qquad
 \widehat{\mathcal R}_{\mathcal B}\circ\widehat{\mathcal R}_{\mathcal B}
  =\widehat{\mathcal R}_{\mathcal B}.
 \label{eq:empirical-idempotence}
\end{equation}
Repeated passes with the same inputs and partition yielded bit-identical running statistics. Resampling the partition changes the empirical operator; the observed variation was approximately $10^{-2}$. Stochastic input transformations and any other forward-pass randomness must also be fixed for exact repeatability.

\subsection{Finite-batch derivation}
\label{app:finite-batch}

\paragraph{Assumptions.}
Hold the weights and finite network depth $L$ fixed. Let $P=\mathcal D_r$ be a distribution of independent input images, with sufficient moments for the mean and variance estimators below; spatial sites within an image need not be independent. BN uses a fixed $\epsilon>0$. Assume the recursively defined activation-moment functionals admit the second-order expansion in Eq.~\eqref{eq:moment-expansion}, with the stated moment and remainder bounds. For ReLU networks this is regularity of the expected moment functionals, including activation-threshold crossings, rather than pointwise twice differentiability of ReLU. First consider $N_r=Kb$ images in $K$ equal-sized batches.

\paragraph{The local expansion and the Jensen term.}
At one scalar BN coordinate write
$F(h;u,v)=\gamma(h-u)(v+\epsilon)^{-1/2}+\beta$ and $a=v+\epsilon$.
For moment errors $\delta_u=\hat u-u$ and $\delta_v=\hat v-v$, Taylor expansion in the estimated moments gives
\begin{equation}
\begin{aligned}
 F(h;\hat u,\hat v)-F(h;u,v)
 ={}&-\frac{\gamma\delta_u}{\sqrt a}
     -\frac{\gamma(h-u)\delta_v}{2a^{3/2}}\\
    &+\frac{\gamma\delta_u\delta_v}{2a^{3/2}}
     +\frac{3\gamma(h-u)\delta_v^2}{8a^{5/2}}
     +\mathcal E_b.
\end{aligned}
\label{eq:bn-taylor}
\end{equation}
The reciprocal-standard-deviation map
$q(v)=(v+\epsilon)^{-1/2}$ is convex, since
$q''(v)=3(v+\epsilon)^{-5/2}/4>0$. Jensen's inequality applies to the random batch variance $\hat v$:
$\mathbb E q(\hat v)\geq q(\mathbb E\hat v)$.
The quadratic terms in Eq.~\eqref{eq:bn-taylor} have order $b^{-1}$ when moment fluctuations have size $b^{-1/2}$ and the corresponding moments and remainder are controlled. This does not assign a universal sign to the complete output bias: $h$, $\hat u$, and $\hat v$ come from the same batch and are dependent. The following calculation retains that dependence.

\paragraph{From one batch to the full network.}
Let $\mathcal T_\ell(P)$ be the pre-BN population mean and variance at layer $\ell$ when preceding layers use their oracle $P$ moments. It includes within-image spatial averaging. For the empirical image distribution
$P_b=b^{-1}\sum_{i=1}^{b}\delta_{X_i}$, train-mode normalization uses the corresponding batch moments $\mathcal T_\ell(P_b)$ throughout the preceding layers. Write the assumed second-order expansion as
\begin{equation}
\begin{aligned}
 \mathcal T_\ell(P_b)-\mathcal T_\ell(P)
  ={}&\frac1b\sum_{i=1}^{b}\psi_\ell(X_i)
    +\frac1{2b^2}\sum_{i,j=1}^{b}\Psi_\ell(X_i,X_j)
    +r_{\ell,b}.
\end{aligned}
\label{eq:moment-expansion}
\end{equation}
Here $\mathbb E\psi_\ell(X)=0$ and
$\mathbb E\Psi_\ell(x,X)=\mathbb E\Psi_\ell(X,x)=0$ for every $x$; the kernels have finite second moments, including on the diagonal. Assume
$\mathbb E\|r_{\ell,b}\|=o(b^{-1})$ and
$\mathbb E\|r_{\ell,b}\|^2=o(b^{-1})$.
The first-order expectation vanishes. Independence of images eliminates the off-diagonal expectations in the double sum, leaving its $b$ diagonal terms:
\begin{equation}
 \mathbb E\mathcal T_\ell(P_b)-\mathcal T_\ell(P)
 =\frac{\mathbb E\Psi_\ell(X,X)}{2b}+o(b^{-1}),
 \qquad
 \mathbb E\|\mathcal T_\ell(P_b)-\mathbb E\mathcal T_\ell(P_b)\|^2
 =O_L(b^{-1}).
 \label{eq:batch-moment-bias}
\end{equation}
This yields the finite-batch offset without treating a sample and its batch statistics as independent.

PyTorch uses the biased batch variance for forward normalization and a Bessel-corrected variance for its running-variance update.\footnote{\url{https://docs.pytorch.org/docs/2.10/generated/torch.nn.BatchNorm2d.html}} For $bS_\ell$ scalar sites per channel, the factor $bS_\ell/(bS_\ell-1)$ contributes another $O(b^{-1})$ term at fixed spatial size $S_\ell$; it does not make spatial sites independent. Thus the actual statistic $S_{\ell,b}$ written into the running average satisfies
\begin{equation}
 \mathbb E S_{\ell,b}=\varphi_\ell^*+\frac{c_\ell}{b}+o(b^{-1}),
 \qquad
 \mathbb E\|S_{\ell,b}-\mathbb E S_{\ell,b}\|^2=O_L(b^{-1}).
 \label{eq:written-stat}
\end{equation}

\paragraph{Averaging retain batches.}
For an arithmetic average of $K=N_r/b$ independent batches,
$\widehat\varphi_{\ell,N_r,b}=K^{-1}\sum_{k=1}^{K}S_{\ell,b}^{(k)}$.
Equation~\eqref{eq:written-stat} gives a mean-squared sampling fluctuation
$K^{-1}O_L(b^{-1})=O_L(N_r^{-1})$, while the expectation retains the finite-batch offset. Over the finite collection of layers and channels,
\begin{equation}
 \boxed{\|\widehat\varphi_{N_r,b}-\varphi^*\|
       =O_p(N_r^{-1/2})+O_L(b^{-1}).}
 \label{eq:full-error}
\end{equation}
For unequal batch sizes $b_k$ and normalized averaging weights $w_k$, the corresponding terms are
$O_p((\sum_k w_k^2/b_k)^{1/2})$ and
$O_L(\sum_k w_k/b_k)$, using the actual size and weight of the final batch.

The functionals $\mathcal T_\ell$ already include propagation through earlier layers, so their constants depend on depth, weights, and activation sensitivities. Equation~\eqref{eq:full-error} is a fixed-depth result, not a depth-uniform bound. Increasing $N_r$ at fixed $b$ reduces sampling variation but does not remove the finite-batch term.

\subsection{Empirical reference agreement and layerwise mismatch}
\label{app:bn-measurements}

\paragraph{Agreement between recalibration configurations.}
We compare statistics estimated from $5{,}000$ retain images with an empirical reference estimated from all $45{,}000$ retain images at batch size $256$, using \texttt{update\_bn} at the same fixed unlearned weights. At the default batch size of $128$, the layer-averaged normalized running-mean discrepancy,
\begin{equation}
 \frac1L\sum_{\ell=1}^{L}
 \frac{\|\widehat\mu_{5000,128}^{\ell}-\widehat\mu_{45000,256}^{\ell}\|_2}
      {\|\widehat\sigma_{45000,256}^{\ell}\|_2},
 \label{eq:empirical-reference-gap}
\end{equation}
is $0.0016$--$0.0018$ across GA, SalUn, SCRUB, and BadTeacher. These measurements quantify agreement with the full-retain empirical reference, which itself uses finite batches, rather than error against the oracle $\varphi^*$.

\paragraph{Layerwise normalization mismatch.}
Figure~\ref{fig:bn-depth} compares each unlearned checkpoint with its retain-recalibrated copy. Define
\begin{equation}
 \delta_\mu^\ell=
 \frac{\|\mu^\ell-\widehat\mu_{\mathrm{cal}}^\ell\|_2}
      {\|\widehat\sigma_{\mathrm{cal}}^\ell\|_2},
 \qquad
 \delta_v^\ell=
 \frac{\|(\sigma^\ell)^2-(\widehat\sigma_{\mathrm{cal}}^\ell)^2\|_2}
      {\|(\widehat\sigma_{\mathrm{cal}}^\ell)^2\|_2}.
 \label{eq:depth-mismatch}
\end{equation}
Across the 20 BN layers, the mean-mismatch profiles are method-dependent and non-monotonic. Fitted slopes per layer are $-0.0040$ for GA, $-0.0017$ for SCRUB, $+0.0018$ for SalUn, and $+0.0056$ for BadTeacher. The largest observed layerwise mean mismatch is $0.3402$, at BadTeacher's final BN layer. These profiles locate the normalization mismatch corrected by recalibration; they measure a different quantity from finite-batch estimation bias.

\begin{figure}[t]
\centering
\includegraphics[width=0.49\linewidth]{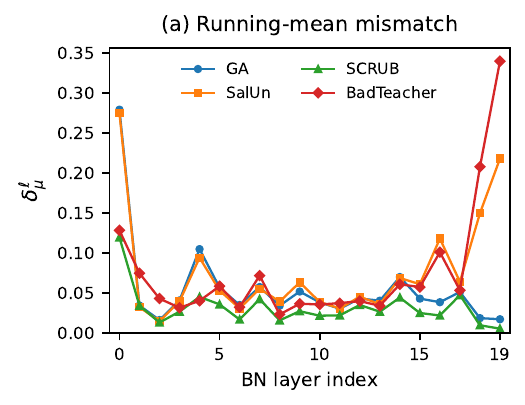}\hfill
\includegraphics[width=0.49\linewidth]{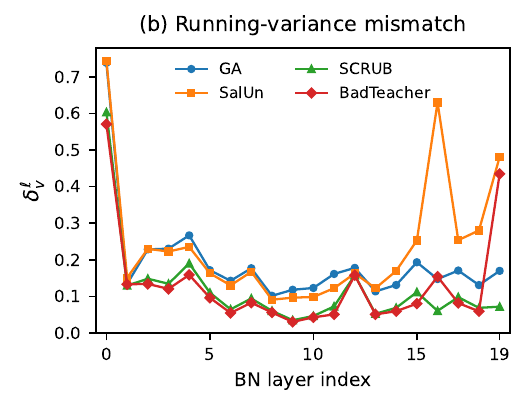}
\caption{Layerwise normalization mismatch on CIFAR-10/ResNet-18, forgetting class 0. Each checkpoint is compared with its own retain-recalibrated copy ($N_r=5000$, $b=128$, fixed subset and batch order). \textbf{(a)} Normalized running-mean mismatch. \textbf{(b)} Relative running-variance mismatch, as defined in Eq.~\eqref{eq:depth-mismatch}. GA, SalUn, and SCRUB use train-mode BN during unlearning; BadTeacher uses eval mode. All 20 BN layers are shown in module order. The experiment uses seed 0; no across-seed error bars are shown.}
\label{fig:bn-depth}
\end{figure}

\section{Proof of Proposition~\ref{prop:t3}}
\label{app:t3-proof}

\begin{proof}
The decomposition is the identity $A - C = (B - C) + (A - B)$ for $A = \mathrm{LP}^M, B = \mathrm{LP}^{M_{\mathrm{cal}}}, C = \mathrm{LP}^{M_{\mathrm{retr}}}$. Existence: when $\varphi = \varphi^*$, then $M = M_{\mathrm{cal}}$ so the BN-residual $A - B = 0$; under $\mathcal{R}^*$, $A \to B, B \to B, C \to C$, so the NC-residual $B - C$ is invariant. Uniqueness: suppose $\mathrm{LP}^M - \mathrm{LP}^{M_{\mathrm{retr}}} = X + Y$ with $X$ vanishing at $\varphi = \varphi^*$ and $Y$ invariant under $\mathcal{R}^*$. Evaluate the identity on $M_{\mathrm{cal}}$: the left side becomes $B-C$, $X$ vanishes, and $Y$ is unchanged by invariance. So $Y = B - C$, hence $X = A - B$. Corollary~\ref{cor:nc} follows: the BN-residual is induced by an operator leaving encoder weights unchanged, so any nonzero BN-residual reflects a measurement bias, not an encoder-geometry change.
\end{proof}

\section{Cross-Correlation: Why the Pythagorean Form Fails}
\label{app:cross-corr}

For each method we measured the empirical correlation $\rho$ between weight-induced ($\delta_W$) and BN-state-induced ($\delta_{\mathrm{BN}}$) penultimate-layer feature perturbations on forget-class inputs. The implementation computes
\begin{align*}
\delta_W &= h(\theta_T,\varphi^*)-h(\theta_0,\varphi^*(\theta_0)),\\
\delta_{\mathrm{BN}} &= h(\theta_T,\varphi_T)-h(\theta_T,\varphi^*).
\end{align*}
The first compares weights with both models recalibrated; the second isolates the statistics effect at fixed final weights. Here $(\theta_0, \varphi_0)$ is the pre-unlearning checkpoint, $(\theta_T, \varphi_T)$ is the post-unlearning checkpoint, and $\varphi^* = \varphi^*(\theta_T; \mathcal{D}_r)$ is the recalibrated state defined in Section~\ref{sec:operator}. Computing $\delta_{\mathrm{BN}}$ against $\varphi^*$ rather than $\varphi_0$ means $\delta_{\mathrm{BN}}$ measures the deviation of the post-unlearning running stats from the population fixed point at fixed final weights---the same quantity that drives the diagnostic $\Delta F$ in the main paper.

\paragraph{Regime-dependence.} For methods that update BN running statistics during unlearning (train-mode methods: GA, SCRUB, SalUn, IncompTeacher, NoiseInject, SSD, GA+FT), $\varphi_T$ diverges from the population fixed point during unlearning, and $\delta_{\mathrm{BN}}$ captures the corruption-induced perturbation. For methods that freeze BN during unlearning (eval-mode methods: BadTeacher, and PGU in our re-implementation), running statistics never move: $\varphi_T = \varphi_0$ exactly, verified by direct checkpoint comparison (max $\|\varphi_T - \varphi_0\|_\infty = 0$ across all 20 BN layers for BadTeacher). For these methods $\delta_{\mathrm{BN}}$ reduces to $h(\theta_T, \varphi_0) - h(\theta_T, \varphi^*)$, a calibration gap between full-data normalization ($\varphi_0$, computed from 50k training samples over all classes) and retain-only normalization ($\varphi^*$, computed from 45k retain samples over 9 classes). This gap is small in LP space (BadTeacher's BN-residual in Table~\ref{tab:decomp-cifar} is $+0.37$pp on CIFAR-10) but has a fixed magnitude in feature space because both stat sets are well-calibrated to their respective distributions.

\begin{table}[h]
\caption{Cross-correlation analysis on CIFAR-10 (single-class). $E_W = \|\delta_W\|^2$, $E_{\mathrm{BN}} = \|\delta_{\mathrm{BN}}\|^2$, $E_{\mathrm{total}} = \|\delta_W + \delta_{\mathrm{BN}}\|^2$. Pyth-error $= |E_W + E_{\mathrm{BN}} - E_{\mathrm{total}}|/E_{\mathrm{total}}$. Methods grouped by BN mode during unlearning.}
\label{tab:crosscorr}
\centering
\small
\begin{tabular}{llccccc}
\toprule
Method & BN mode & $E_W$ & $E_{\mathrm{BN}}$ & $E_{\mathrm{total}}$ & $\rho$ & Pyth-error \\
\midrule
\multicolumn{7}{l}{\textit{Train-mode methods}} \\
GA            & train & 57.4 & 24.4 & 64.1 & $-0.24$ & 27.6\% \\
SCRUB         & train & 46.9 & 3.1  & 49.6 & $-0.02$ & 0.7\% \\
SalUn         & train & 50.3 & 34.2 & 46.2 & $-0.46$ & 83.2\% \\
NoiseInject  & train & 58.8 & 7.3  & 41.6 & $-0.59$ & 59.1\% \\
IncompTeacher & train & 42.6 & 4.8  & 27.9 & $-0.68$ & 69.8\% \\
GA+FT         & train & 49.7 & 1.1  & 53.4 & $+0.17$ & 4.8\% \\
SSD           & train & 47.2 & 21.5 & 27.7 & $-0.64$ & 147.6\% \\
\midrule
\multicolumn{7}{l}{\textit{Eval-mode methods}} \\
BadTeacher$^\dagger$ & eval & 42.6 & 20.1 & 24.8 & $-0.65$ & 152.6\% \\
\bottomrule
\end{tabular}\\[2pt]
{\footnotesize $^\dagger$BN frozen during unlearning; $\delta_{\mathrm{BN}}$ measures the retain-data calibration gap between $\varphi_0$ and $\varphi^*$ rather than unlearning-induced corruption (see prose).}
\end{table}

\paragraph{Headline reading.} Across the seven train-mode methods, mean $|\rho| = 0.40$ with maximum $0.68$. The threshold for a Pythagorean form ($|\rho| < 0.2$ across all rows) is not met. The Pythagorean error exceeds $50\%$ on three of seven train-mode methods and reaches $147.6\%$ on SSD. This is the load-bearing finding: an additive squared-energy decomposition $\|\delta_W\|^2 + \|\delta_{\mathrm{BN}}\|^2 = \|\delta_W + \delta_{\mathrm{BN}}\|^2$ fails empirically, which is why the LP-space decomposition of Eq.~\eqref{eq:decomp} is presented as an algebraic identity rather than as an orthogonal Pythagorean split. The negative correlation pattern in train-mode BN-corrupted methods (gradient ascent shifts running stats toward forget while weight updates shift representations away from forget) is mechanistically consistent with the two-phenomena framing of Section~\ref{sec:phenomena}---the weight-induced and stat-induced perturbations partially cancel, which is what produces the systematically negative $\rho$ values on the three large-Pyth-error rows (SalUn, NoiseInject, SSD).

The eval-mode row carries a different interpretation. BadTeacher's row reflects the calibration gap between $\varphi_0$ and $\varphi^*$, not unlearning-induced corruption (since BadTeacher's stats never moved). The large $E_{\mathrm{BN}}$ and Pyth-error for BadTeacher are not evidence of BN corruption; they are evidence that recalibrating to retain-only stats induces a feature-space perturbation whose direction is anti-correlated with the unlearning weight perturbation ($\rho = -0.65$). We retain the row in Table~\ref{tab:crosscorr} for completeness and to make this dependence explicit.

\paragraph{Why the LP-space decomposition is the load-bearing one.} The decomposition of Eq.~\eqref{eq:decomp}---BN-residual $= \mathrm{LP}^M - \mathrm{LP}^{M_{\mathrm{cal}}}$, NC-residual $= \mathrm{LP}^{M_{\mathrm{cal}}} - \mathrm{LP}^{M_{\mathrm{retr}}}$---is uniform across BN modes. For eval-mode BadTeacher, the BN-residual on CIFAR-10 is $+0.37$pp (Table~\ref{tab:decomp-cifar}), correctly reflecting that BadTeacher's surface LP measurement is barely biased by its (frozen, retain-aligned) BN stats. For eval-mode PGU on CIFAR-10, the BN-residual is $+0.77$pp (Table~\ref{tab:decomp-cifar}). For train-mode SalUn, GA, and SSD on CIFAR-100, the BN-residuals are large and negative ($-27.33$, $-29.67$, $-20.67$), reflecting that BN measurement bias deflates the surface LP for these methods. The LP-space decomposition cleanly distinguishes regime from method, while the feature-space cross-correlation in Table~\ref{tab:crosscorr} does not. This is one reason the LP-space form is the primary decomposition in the paper; the feature-space analysis is presented in this appendix solely to justify the algebraic-identity (rather than energy) framing of Eq.~\eqref{eq:decomp}.

\section{CIFAR-100 Full Results}
\label{app:cifar100}

\begin{table}[h]
\caption{CIFAR-100 single-class forget (class $0$, $p=0.01$; single seed). $\Delta\mathrm{LP}=\mathrm{LP}^{M_{\mathrm{cal}}}-\mathrm{LP}^M=-\mathrm{BN\text{-}residual}$.}
\label{tab:cifar100-1class}
\centering
\small
\begin{tabular}{lccccccc}
\toprule
Method & Post-F\% & LP-pre & LP-post & $\Delta\mathrm{LP}$ & NCC-pre & NCC-post & Relearn AUC \\
\midrule
Retrain & 0.0 & 73.3 & 74.0 & $+0.7$ & 72.3 & 69.0 & 0.633 \\
SCRUB & 0.0 & 81.7 & 86.7 & $+5.0$ & 57.3 & 78.7 & 0.306 \\
GA & 1.0 & 58.0 & 87.7 & $+29.7$ & 17.0 & 81.3 & 0.811 \\
SalUn & 1.0 & 60.7 & 88.0 & $+27.3$ & 19.0 & 83.3 & 0.815 \\
SSD & 0.0 & 70.0 & 90.7 & $+20.7$ & 49.3 & 89.1 & 0.862 \\
GA+FT & 1.0 & 88.7 & 88.7 & $0.0$ & --   & --   & 0.824 \\
IncompTeacher & 75.0 & 87.0 & 90.0 & $+3.0$ & 70.0 & 76.3 & 0.969 \\
BadTeacher & 28.0 & 75.0 & 89.0 & $+14.0$ & 47.0 & 58.2 & 0.965 \\
NoiseInject & 85.0 & 88.4 & 91.1 & $+2.7$ & 64.7 & 71.4 & 0.987 \\
PGU & 4.0 & --   & 94.3 & --      & --   & --   & 0.991 \\
\bottomrule
\end{tabular}
\end{table}

\begin{table}[h]
\caption{CIFAR-100 $5$-class forget (classes $0$--$4$, $p=0.05$; single seed).}
\label{tab:cifar100-5class}
\centering
\small
\begin{tabular}{lccccccc}
\toprule
Method & Post-F\% & LP-pre & LP-post & $\Delta\mathrm{LP}$ & NCC-pre & NCC-post & Relearn AUC \\
\midrule
Retrain & 0.0 & 54.9 & 48.9 & $-6.0$ & 51.5 & 44.6 & 0.596 \\
SCRUB & 0.0 & 65.6 & 66.6 & $+1.0$ & 59.3 & 60.7 & 0.828 \\
GA & 0.0 & 43.8 & 45.9 & $+2.1$ & 34.5 & 37.1 & 0.790 \\
SalUn & 0.0 & 42.1 & 44.4 & $+2.3$ & 32.9 & 36.1 & 0.800 \\
\bottomrule
\end{tabular}
\end{table}

\section{Hyperparameters and Implementation Details}
\label{app:hyperparams}

\paragraph{Architecture.} ResNet-18 with CIFAR adaptation: $3\times 3$ stem conv, no max-pool, $512 \to 10$ or $100$ classifier. GroupNorm variant: all \texttt{nn.BatchNorm2d(c)} replaced by \texttt{nn.GroupNorm(num\_groups=32, num\_channels=c)}.

\paragraph{Method hyperparameters (CIFAR-10 single-class).} SCRUB: lr=$10^{-3}$, steps=$500$, retain\_steps=$2$, kl\_weight=$1.0$. IncompTeacher: lr=$10^{-3}$, steps=$500$, $\tau_{\mathrm{dist}}=4.0$, $\alpha=0.95$. SalUn: lr=$10^{-3}$, steps=$200$, threshold=$0.5$. GA: lr=$10^{-3}$, steps=$500$. SSD: $\lambda=10000$. NoiseInject (simplified noise baseline): $\sigma_{\mathrm{noise}}=5.0$. GA+FT: GA-steps=$200$, FT-epochs=$1$, FT-lr=$10^{-4}$. PGU: see Appendix~\ref{app:pgu}.

\paragraph{Diagnostic protocol.} BN recalibration: \texttt{torch.optim.swa\_utils.update\_bn} on retain loader, train mode, no gradients, mini-batch size $b=128$. LP-50: \texttt{sklearn.linear\_model.LogisticRegression(solver='lbfgs', max\_iter=1000)}, $50$ samples per class. NCC: class-mean features on $50$ samples per class, $\ell_2$ assignment. Relearn-AUC: fine-tune on $50$ forget-class samples for $50$ epochs (lr=$10^{-4}$), record forget accuracy curve, integrate normalized.

\paragraph{Additional experiment cohorts.} The normalization diagnostics in Appendix~\ref{app:t2-proof} and the CIFAR experiments in Appendices~\ref{app:mia}--\ref{app:random-label} use a separately trained base model from the main evaluation. Comparisons are paired within each experiment; absolute values across cohorts are not directly comparable. Appendix~\ref{app:correlations} instead recomputes correlations from Table~\ref{tab:main-cifar10}. The ImageNet-100 experiment uses its own pretrained ResNet-50.

\paragraph{Compute.} All experiments on a single NVIDIA A6000 GPU ($\approx 30$GB usable VRAM). Total wall-clock $\approx 90$ GPU-hours.

\begin{table}[h]
\caption{GroupNorm control on CIFAR-10. BN $\Delta F$ reproduces Table~\ref{tab:main-cifar10}'s 3-seed means; GN $\Delta F$ is the same metric on a GN-ResNet retrained from scratch (single seed; mechanistic guarantee yields zero variance).}
\label{tab:gn-control}
\centering
\small
\begin{tabular}{lccc}
\toprule
Method & BN $\Delta F$ (pp) & GN Pre-F\% & GN $\Delta F$ (pp) \\
\midrule
GA & $+0.0$ & 89.9 & $0.00$ \\
SalUn & $+62.1$ & 86.4 & $0.00$ \\
BadTeacher & $+78.2$ & 0.2 & $0.00$ \\
IncompTeacher & $+48.0$ & 0.1 & $0.00$ \\
SCRUB & $+0.0$ & 0.0 & $0.00$ \\
GA+FT & $-0.2$ & 91.4 & $0.00$ \\
NoiseInject & $+18.9$ & 93.6 & $0.00$ \\
SSD & $+62.4$ & 93.6 & $0.00$ \\
Retrain & $+0.0$ & 0.0 & $0.00$ \\
\bottomrule
\end{tabular}
\end{table}

\paragraph{GN per-method retain/LP detail.} GN-ResNet retrains from scratch to $93.97\%$ test accuracy. GN retain / GN LP-50: GA $94.2/91.2$; SalUn $94.4/89.2$; BadTeacher $58.9/87.1$; IncompTeacher $93.4/95.7$; SCRUB $54.9/67.8$; GA+FT $94.3/92.0$; NoiseInject $94.0/93.5$; SSD $94.0/93.4$; Retrain $90.8/67.2$. BadTeacher and SCRUB exhibit retain degradation on \emph{both} BN- and GN-ResNet---a method-level limitation, not a normalization confound. Five of the eight GN methods (GA, SalUn, GA+FT, NoiseInject, SSD) did not reach a forgetting regime at default hyperparameters on the GN backbone (Pre-F $\geq 86\%$); for these methods $\Delta F = 0$ is mechanistically guaranteed by the absence of any $\varphi$ to update, but is empirically tested only on the methods that did reach forgetting.

\section{PGU Re-Implementation Notes}
\label{app:pgu}

We re-implement PGU~\citep{hoang2024learn} from the official reference. The official two-phase protocol (phase 1: $99$ epochs; phase 2: up to $200$ epochs with early stop at $3$pp forget accuracy, weight decay $5{\times}10^{-4}$, entropy weight $0.1$) estimates the retain activation covariance from a $5{,}000$-sample augmented subset. Our re-implementation estimates that covariance using the full $45{,}000$-sample retain set without augmentation---a strictly more accurate estimator yielding better-conditioned projection bases. The cleaner-covariance variant uses the single 99-epoch phase, whereas the official protocol includes the second phase and its early-stop criterion.

The two implementations produce different operating points on the same method. The official two-phase checkpoint (used in Table~\ref{tab:decomp-cifar}) reaches Pre-F\%${=}6.9$, Post-F\%${=}21.4$, $\Delta F = +14.5$pp---a partially-converged regime. The cleaner-covariance single-phase implementation (PGU$^\dagger$ in Table~\ref{tab:main-cifar10}) reaches Pre-F\%${=}1.1$, Post-F\%${=}62.0$, $\Delta F = +60.9$pp in $99$ epochs: better projection bases drive forget accuracy almost to zero, exposing the BN misalignment phenomenon fully. This mirrors the GA step-count sweep: a method's apparent BN safety reflects how far into the weight-erasure regime its operating point has been pushed.

\section{GA Step-Count Sweep}
\label{app:ga-sweep}

\begin{table}[h]
\caption{GA on CIFAR-10 (single-class, class 0). At $100$ steps GA is BadTeacher-like ($+76$pp); at $500$ steps GA is Retrain-like. Same method, entire BN-vulnerability spectrum as a function of one knob.}
\label{tab:ga-sweep}
\centering
\small
\begin{tabular}{ccccc}
\toprule
Steps & Pre-F\% & Pre-Retain\% & Post-F\% & $\Delta F$ \\
\midrule
$100$ & $9.1$ & $78.0$ & $85.3$ & $+76.2$ \\
$200$ & $4.5$ & $76.8$ & $54.9$ & $+50.4$ \\
$300$ & $0.8$ & $74.0$ & $15.8$ & $+15.0$ \\
$400$ & $0.1$ & $69.2$ & $0.7$ & $+0.6$ \\
$500$ & $0.0$ & $65.2$ & $0.0$ & $\phantom{+}0.0$ \\
$600$ & $0.0$ & $61.4$ & $0.0$ & $\phantom{+}0.0$ \\
\bottomrule
\end{tabular}
\end{table}

GA's appearance in Table~\ref{tab:main-cifar10} as a $\Delta F=0$ method reflects its 500-step operating point rather than architectural immunity. Within this fixed-composition sweep, increasing the number of ascent steps reduces the recalibration gap while degrading retain accuracy.

\section{Hyperparameter Sweep}
\label{app:sweep}

\begin{table}[h]
\caption{BadTeacher (left) and SalUn (right) hyperparameter sweep, $\Delta F$ in pp. BadTeacher: all $9$ configurations preserve retain $\geq 85\%$; mean $78.3$pp and population standard deviation $9.4$pp across configurations. SalUn: $\diamondsuit$ retain-collapse boundary; $\dagger$ degenerate (retain $36.6\%$).}
\label{tab:sweep}
\centering
\small
\begin{tabular}{lccc@{\hskip 1em}lccc}
\toprule
\multicolumn{4}{c}{BadTeacher} & \multicolumn{4}{c}{SalUn} \\
\cmidrule(lr){1-4}\cmidrule(lr){5-8}
LR$\times$\textbackslash{}Steps$\times$ & $0.5$ ($250$) & $1$ ($500$) & $2$ ($1000$) & LR$\times$\textbackslash{}Steps$\times$ & $0.5$ ($100$) & $1$ ($200$) & $2$ ($400$) \\
\midrule
$0.5\times$ & $+72.0$ & $+57.8$ & $+70.0$ & $0.5\times$ & $+81.9$ & $+83.0$ & $+62.2$ \\
$1\times$ & $+77.9$ & $+83.0$ & $+87.4$ & $1\times$ & $+83.3$ & $+62.0$ & $+0.4^{\diamondsuit}$ \\
$2\times$ & $+82.5$ & $+86.0$ & $+87.8$ & $2\times$ & $+63.3$ & $\dagger$DEGEN & $+0.3^{\diamondsuit}$ \\
\bottomrule
\end{tabular}
\end{table}

At fixed batch composition, every tested configuration preserving retain accuracy at $\geq85\%$ produces $\Delta F\in[57.8,87.8]$pp. The low-gap configurations in this grid degrade retain accuracy. Thus learning-rate and step-count tuning within the tested grid does not remove the artifact while preserving retain utility. The separate retain-mixing intervention is reported in Appendix~\ref{app:batch-mixing}.

\section{Adversarial Recovery: Additional Results}
\label{app:sub100}

\subsection{Sub-100 budgets}

\begin{table}[h]
\caption{Sub-100 adversarial recovery on three BN-vulnerable methods (CIFAR-10, in-distribution unlabeled samples from CIFAR-10 test). BadTeacher and NoiseInject recovered to $\geq 88\%$ at $B{=}10$.}
\label{tab:sub100}
\centering
\small
\begin{tabular}{lccccc}
\toprule
Method & $B{=}10$ & $B{=}25$ & $B{=}50$ & $B{=}100$ & $B{=}500$ \\
\midrule
BadTeacher & $88.7$ & $90.7$ & $92.6$ & $92.7$ & $92.1$ \\
SalUn & $54.4$ & $54.8$ & $57.4$ & $58.0$ & $57.6$ \\
NoiseInject & $91.0$ & $91.9$ & $93.4$ & $93.7$ & $93.2$ \\
\bottomrule
\end{tabular}
\end{table}

Even the smallest tested budget, ten unlabeled images, exposes the artifact in these BN-vulnerable checkpoints.

\subsection{OOD recovery}
\label{app:ood-recovery}

We compare recovery using in-distribution unlabeled data (CIFAR-10 test set) against recovery using out-of-distribution data (CIFAR-100 raw pixels normalized with the CIFAR-10 mean/std the unlearned model was trained against). Both are evaluated at attacker budget $B{=}100$. The OOD/in-dist ratio is the OOD recovery accuracy divided by the in-dist recovery accuracy on the same method.

\begin{table}[h]
\caption{OOD adversarial recovery (CIFAR-10, ResNet-18, $B{=}100$). \emph{Pre-F}: surface forget-class accuracy on the unlearned checkpoint as released. \emph{Auditor}: recalibration with full retain set (the in-house ceiling for an authorized auditor). \emph{In-Dist}: attacker recovery using CIFAR-10 test images. \emph{OOD}: attacker recovery using CIFAR-100 raw pixels under CIFAR-10 normalization. \emph{Ratio}: OOD/In-Dist. The ratio range is $78\%$--$95\%$, indicating that BN update is largely insensitive to the source of the attacker's unlabeled data: per-channel natural-image statistics correlate strongly across CIFAR-10 and CIFAR-100. GPM-W is a weight-projection unlearning variant evaluated only in this experiment.}
\label{tab:ood-recovery}
\centering
\small
\begin{tabular}{lccccc}
\toprule
Method & Pre-F\% & Auditor\% & In-Dist\% & OOD\% & OOD/In-Dist \\
\midrule
GA+FT          & $61.6$ & $60.0$ & $59.1$ & $47.4$ & $80\%$ \\
SalUn          & $\phantom{0}3.1$ & $58.5$ & $57.6$ & $45.1$ & $78\%$ \\
GPM-W          & $\phantom{0}6.5$ & $81.1$ & $80.3$ & $73.9$ & $92\%$ \\
BadTeacher     & $12.9$ & $92.4$ & $91.8$ & $85.0$ & $92\%$ \\
IncompTeacher  & $\phantom{0}6.6$ & $54.2$ & $52.7$ & $47.5$ & $90\%$ \\
NoiseInject   & $69.5$ & $93.5$ & $93.0$ & $88.5$ & $95\%$ \\
SSD            & $18.6$ & $81.4$ & $80.7$ & $72.3$ & $90\%$ \\
PGU            & $\phantom{0}6.9$ & $21.4$ & $22.1$ & $17.8$ & $81\%$ \\
\bottomrule
\end{tabular}
\end{table}

The ratio is bounded below by $78\%$ (SalUn---a method whose surface Pre-F is already near zero, leaving the recoverable ceiling lower in absolute terms) and above by $95\%$ (NoiseInject). No method falls below the $78\%$ ratio. This rules out the natural defense ``the attacker doesn't have access to anything close enough to the retain distribution'': CIFAR-100 raw pixels are demonstrably close enough on the relevant axis (per-channel BN statistics), even though the two datasets share no class labels and have different resolutions. The OOD attack is cheaper than the in-distribution attack only by a small margin; the threat model of Section~\ref{sec:threat-model} need not assume strong distributional knowledge.

\section{LP-Probe Training-Budget Bias}
\label{app:lp-budget}

\begin{table}[h]
\caption{LP probe training budget bias on CIFAR-10. Adam-50 underestimates LP-50 by $14$--$22$pp for genuine encoder modifications, but is approximately unbiased for intact encoders.}
\label{tab:lp-budget}
\centering
\small
\begin{tabular}{lccccc}
\toprule
Method & Adam-50 & Adam-500 & sklearn-lbfgs & Conv.\ budget & Gap \\
\midrule
Retrain & $58.7$ & $76.9$ & $76.5$ & $500$ ep & $-17.8$ \\
SCRUB & $67.9$ & $81.8$ & $81.9$ & $500$ ep & $-14.0$ \\
GA & $57.6$ & $79.8$ & $79.7$ & $500$ ep & $-22.1$ \\
GA+FT & $87.6$ & $89.6$ & $89.6$ & $50$ ep & $-2.0$ \\
SalUn & $87.0$ & $88.9$ & $89.0$ & $50$ ep & $-2.0$ \\
BadTeacher & $92.2$ & $91.2$ & $91.2$ & $10$ ep & $+1.0$ \\
\bottomrule
\end{tabular}
\end{table}

The bias is one-directional and favors methods with intact encoders---it makes intact-encoder methods look more separable and modified-encoder methods look more thoroughly forgotten than they are. We use sklearn-lbfgs throughout.

\section{Scale: ResNet-50 / Tiny-ImageNet}
\label{app:scale}

\begin{table}[h]
\caption{Tiny-ImageNet / ResNet-50, single-class forget, three seeds. ``Break'' = breakthrough epoch in relearning fine-tune. ``BN-Drift'' = $\|\varphi - \varphi^*(\theta; \mathcal{D}_r)\|$ summed across BN layers. Relearn protocol: $50$ epochs at LR=$3{\times}10^{-3}$ (calibrated for 200-class competitive threshold).}
\label{tab:scale-tinyimagenet}
\centering
\small
\setlength{\tabcolsep}{4pt}
\begin{tabular}{lcccccccc}
\toprule
Method & Pre-F\% & Post-F\% & $\Delta F$ & Pre-R\% & LP-50 & ReAUC & Break & BN-Drift \\
\midrule
Retrain & $0.0$ & $0.0$ & $\phantom{+}0.0$ & $89.9$ & $80.7$ & $0.849$ & $10$ & $0.282$ \\
SCRUB & $0.0$ & $0.0$ & $\phantom{+}0.0$ & $76.8$ & $89.3$ & $0.300$ & $50$ & $2.822$ \\
GA & $0.0$ & $0.0$ & $\phantom{+}0.0$ & $\phantom{0}2.1$ & $82.7$ & $0.300$ & $50$ & $9.384$ \\
GA+FT & $0.8$ & $0.8$ & $\phantom{+}0.0$ & $95.0$ & $91.3$ & $0.791$ & $10$ & $0.273$ \\
SalUn & $0.0$ & $0.4$ & $+0.4$ & $89.0$ & $89.3$ & $0.788$ & $10$ & $8.989$ \\
IncompTeacher & $0.0$ & $10.0$ & $+10.0$ & $\phantom{0}2.4$ & $88.0$ & $0.703$ & $20$ & $6.182$ \\
SSD & $0.0$ & $0.0$ & $\phantom{+}0.0$ & $92.4$ & $88.7$ & $0.861$ & $10$ & $4.625$ \\
NoiseInject & $84.2$ & $99.0$ & $\mathbf{+14.8}$ & $92.6$ & $89.3$ & $0.999$ & $\phantom{0}0$ & $3.482$ \\
BadTeacher & $0.6$ & $82.8$ & $\mathbf{+82.2}$ & $88.7$ & $88.0$ & $0.990$ & $\phantom{0}0$ & $2.083$ \\
PGU & $68.4$ & $98.0$ & $\mathbf{+29.6}$ & $57.9$ & $88.0$ & $0.999$ & $\phantom{0}0$ & $2.020$ \\
\bottomrule
\end{tabular}
\end{table}

\paragraph{Three regimes of $\Delta F = 0$.} \emph{Genuine forgetting:} SCRUB (Pre-R $=76.8\%$, breakthrough $=50$). \emph{Encoder-level failure (Gao--Lee regime):} GA+FT, SSD with high retain, high LP, and high ReAUC---head suppresses forget but encoder still encodes it. \emph{Saturated weight erasure:} GA, IncompTeacher with destroyed retain. The SalUn anomaly ($\Delta F = 0.4$ vs.\ $+62$ on CIFAR-10) is a method-by-method scale variation we do not resolve. The headline reading is unchanged: where the BN illusion appears, it transfers.

\begin{table}[h]
\caption{GA step-count sweep on Tiny-ImageNet/ResNet-50 (single seed, single-class).}
\label{tab:scale-ga-sweep}
\centering
\small
\begin{tabular}{cccccc}
\toprule
Steps & Pre-F\% & Post-F\% & $\Delta F$ (pp) & Pre-R\% & BN-Drift \\
\midrule
$50$  & $0.0$ & $84.6$ & $+84.6$ & $46.0$ & $7.956$ \\
$100$ & $0.0$ & $42.4$ & $+42.4$ & $44.4$ & $7.962$ \\
$200$ & $0.0$ & $\phantom{0}0.2$ & $\phantom{0+}0.2$ & $40.8$ & $7.950$ \\
$400$ & $0.0$ & $\phantom{0}0.0$ & $\phantom{0+}0.0$ & $26.8$ & $7.860$ \\
$800$ & $0.0$ & $\phantom{0}0.0$ & $\phantom{0+}0.0$ & $\phantom{0}0.5$ & $8.421$ \\
\bottomrule
\end{tabular}
\end{table}

The phase transition replicates: cliff edge moves earlier (500 steps $\to$ 200 steps) because $200$-class competitive pressure destroys retain faster. BN-Drift saturates early ($7.95$--$8.42$) and stays roughly constant; $\Delta F$ tracks weight-encoded forget-class signal availability for recal to expose, not BN-Drift magnitude.

\section{LayerNorm Verification: ViT-S/16}
\label{app:vit}

LayerNorm computes statistics per-token per-instance at forward time; there are no running statistics. The recalibration operator is the identity on LN-based models by construction. We confirmed: ViT-S/16 (ImageNet-pretrained, fine-tuned on Tiny-ImageNet) tuned into a forgetting regime on $5$ methods (GA, SalUn, BadTeacher, SCRUB, Retrain) yields $\Delta F = 0.00$pp uniformly. Per-step weight equality verified (parameter tensors bit-identical before/after no-op pass). PGU dropped (projection geometry is BN-specific). The GroupNorm control of Section~\ref{sec:eval-gn} fixes architecture and varies normalization, providing the strict architecture-controlled falsification; the ViT/LN result is complementary, confirming the artifact is normalization-specific rather than dataset- or architecture-specific.

\section{Rank-Based Correlation Tests}
\label{app:correlations}

Table~\ref{tab:correlations} recomputes the associations in Table~\ref{tab:main-cifar10} with Pearson, Spearman, and Kendall tests, using the same ten rows, including Retrain. The pre-recalibration associations are descriptive: both are significant under Spearman, while the linear-probe pair is not significant under Kendall. The relevant conditional observation is that the four unlearning methods and Retrain with Pre-F $\leq3\%$ have substantially different downstream leakage. Post-recalibration forget accuracy resolves these operating points and is strongly associated with both leakage measures.

\begin{table}[h]
\caption{Associations between forget accuracy and leakage measures, calculated from Table~\ref{tab:main-cifar10} ($n=10$, including Retrain). Each cell gives a coefficient and its two-sided $p$-value. Pre-recalibration association with relearning is strong under rank tests; post-recalibration association is significant for both leakage measures under all three tests.}
\label{tab:correlations}
\centering
\small
\setlength{\tabcolsep}{4pt}
\begin{tabular}{@{}lccc@{}}
\toprule
Pair & Pearson ($p$) & Spearman ($p$) & Kendall ($p$) \\
\midrule
Pre-F vs. ReAUC & $0.63\;(0.052)$ & $0.89\;(0.0006)$ & $0.76\;(0.003)$ \\
Pre-F vs. LP-50 & $0.53\;(0.11)$ & $0.66\;(0.039)$ & $0.46\;(0.069)$ \\
Post-F vs. ReAUC & $0.998\;({<}10^{-6})$ & $0.988\;({<}10^{-6})$ & $0.952\;(0.00023)$ \\
Post-F vs. LP-50 & $0.928\;(0.00011)$ & $0.85\;(0.002)$ & $0.736\;(0.0037)$ \\
\bottomrule
\end{tabular}
\end{table}

\section{Membership Inference Before and After Recalibration}
\label{app:mia}

We evaluated four membership-inference attacks on four unlearning methods before and after recalibration: three black-box attacks based on confidence, entropy, and modified entropy, and a white-box gradient-norm attack following~\citet{nasr2019comprehensive}. On these checkpoints, recalibration changed forget accuracy by up to approximately $98$ percentage points, while every attack AUC changed by at most $0.012$ and remained within the confidence interval of the retrain reference, approximately $0.50$. The original-model attack ceiling was $0.61$. Within this measured range, recalibration has little effect on membership inference despite its large effect on forget accuracy. This supports the metric-specific interpretation of the BN illusion: the tested attacks do not exhibit the same normalization sensitivity as forget accuracy and linear probing.

\section{Instance-Wise Unlearning}
\label{app:instance}

We evaluated instance-wise forgetting with samples drawn randomly across all classes, rather than conditioned on a forget class. SalUn's recalibration gap falls to $+3.1$pp, compared with approximately $+78$pp for class-conditional forgetting on the same pipeline. BadTeacher retains a gap of $+89.6$pp. Random sampling removes the systematic class-conditional moment shift driving corruption, while changes in the weights can still cause misalignment with frozen BN statistics. Thus the class-conditional setting amplifies the corruption component, but is not required for the normalization artifact as a whole.

\section{Batch Composition}
\label{app:batch-mixing}

We swept the retain fraction of the unlearning loader from $0$ to $0.9$. For SalUn, retain fractions of at least $0.25$ reduced the recalibration gap from approximately $+75$pp to approximately $0$pp in this sweep. BadTeacher retained a gap of $+89.5$pp at its reference-implementation mixing ratio. Retain mixing therefore mitigates the observed train-mode corruption, while eval-mode misalignment persists in the mixed-batch setting tested here. The learning-rate/step-count results in Appendix~\ref{app:sweep} hold batch composition fixed and address a different intervention.

\section{Activation Divergence Across BN Layers}
\label{app:divergence}

We measured normalized class-conditional activation divergence in the pre-unlearning model at all 20 BN layers, for three forget classes including cat. The measured divergence was nonzero for every class and layer, with a minimum of $0.023$. Early-layer values were $0.07$--$0.17$, and late-layer values reached approximately $0.45$. These measurements motivate the activation-moment separation condition in Theorem~\ref{thm:t1}, including for early layers and visually correlated classes. They characterize the initial model; Eq.~\eqref{eq:moving-ema} describes how the targets subsequently move during unlearning. The theorem's assumption concerns activation moments, not an inequality between class labels or input distributions alone.

\section{A Purely Data-Based Method}
\label{app:random-label}

RandomLabeling fine-tunes on forget samples with uniformly resampled labels, without a gradient-ascent component. It produces $\Delta F=+75.3$pp with train-mode BN and $+58.7$pp with eval-mode BN. The effect therefore extends beyond gradient-ascent methods. In train mode, the forward-pass EMA update is independent of the loss used for the weight update, as formalized by Theorem~\ref{thm:t1}; freezing the buffers removes this update but does not prevent weight--normalization misalignment.

\section{ImageNet Scale}
\label{app:imagenet}

\begin{table}[h]
\caption{ImageNet-100 at $224$-pixel resolution, using an ImageNet-pretrained ResNet-50 and single-class forgetting (single seed; no retrain reference). Values are forget and retain accuracies, with $\Delta F$ in percentage points.}
\label{tab:imagenet100}
\centering
\small
\setlength{\tabcolsep}{3pt}
\begin{tabular}{@{}lccccc@{}}
\toprule
BadTeacher setting & \shortstack{Forget\\before} & \shortstack{Forget\\after} & $\Delta F$ & \shortstack{Retain\\before} & \shortstack{Retain\\after} \\
\midrule
Full-retain recalibration & $0.0\%$ & $92.0\%$ & $+92.0$ & $71.6\%$ & $72.5\%$ \\
Ten-image recovery & $0.0\%$ & $82.0\%$ & $+82.0$ & $71.6\%$ & preserved \\
\bottomrule
\end{tabular}
\end{table}

The normalization artifact persists at ImageNet resolution: full-retain recalibration raises forget accuracy from $0.0\%$ to $92.0\%$, while retain accuracy rises from $71.6\%$ to $72.5\%$. Ten unlabeled images recover $82.0\%$ forget accuracy. Together with the $+82.2$pp BadTeacher result on Tiny-ImageNet (Appendix~\ref{app:scale}), this extends the observed effect to larger-image ResNet-50 settings.

\clearpage
\section{Notation}
\label{app:notation}

Recalibration changes $\Delta F$, $\Delta\mathrm{LP}$, and $\Delta\mathrm{NCC}$ are post minus pre. The decomposition uses
$\mathrm{BN\text{-}residual}=\mathrm{LP}^M-\mathrm{LP}^{M_{\mathrm{cal}}}=-\Delta\mathrm{LP}$,
so a negative BN-residual denotes masked leakage. The mini-batch size $b$ is distinct from the attacker image budget $B$, and distillation temperature $\tau_{\mathrm{dist}}$ is distinct from Kendall's $\tau_{\mathrm K}$ and the step count $T$.

{\small
\renewcommand{\arraystretch}{1.1}
\setlength{\tabcolsep}{4pt}
\begin{longtable}{@{}p{0.29\linewidth}p{0.67\linewidth}@{}}
\toprule
Symbol & Meaning \\
\midrule
\endfirsthead
\toprule
Symbol & Meaning (continued) \\
\midrule
\endhead
\bottomrule
\endfoot
$M=(\theta,\varphi)$ & Model weights paired with BN running statistics \\
$\theta$ & All weights, including convolution kernels, BN affine parameters $\gamma,\beta$, and classifier weights \\
$\varphi$ & Running means and variances stored in the checkpoint \\
$\mu^\ell,(\sigma^\ell)^2$ & Running mean and variance at BN layer $\ell$ \\
$\hat\mu^\ell,\hat v^\ell$ & Mini-batch mean and variance at that layer; the variance estimator is specified in Appendix~\ref{app:finite-batch} \\
$\ell,L$ & BN layer index and number of BN layers \\
$m$ & Exponential-moving-average momentum \\
$\mathcal A^\ell(\theta;\mathcal D)$ & Activation distribution at layer $\ell$ under weights $\theta$ on data $\mathcal D$ \\
$f(x;\theta,\varphi)$ & Forward map, depending on both weights and running statistics \\
$\mathcal R^*_{\mathcal D_r}$ & Oracle retain-distribution recalibration operator \\
$\widehat{\mathcal R}_{N_r,b}$ & Empirical recalibration on $N_r$ retain images in batches of size $b$ \\
$\varphi^*(\theta;\mathcal D_r)$ & Oracle population activation moments at the current weights \\
$\widehat\varphi_{\mathrm{cal}}$ & Empirically recalibrated running-statistic state \\
$M_{\mathrm{cal}}$ & Recalibrated model; the theoretical definition is $\mathcal R^*(M)$ \\
$M_{\mathrm{retr}}$ & Retrain-from-scratch reference \\
$g(M)$ & Metric that depends on the model's forward map \\
$\mathcal D_f,\mathcal D_r$ & Forget and retain data, or their distributions when taking expectations \\
$p$ & Fraction of training data designated for forgetting \\
$N_r$ & Retain images used in recalibration \\
$b,B$ & Recalibration mini-batch size; attacker's unlabeled-image budget \\
$T,\mathcal L,\mathcal P_W$ & Unlearning step count; training loss; weight-gradient projection \\
$\tau_{\mathrm{dist}}$ & Distillation temperature \\
Pre-F, Post-F & Forget accuracy before and after recalibration \\
$\Delta F$ & Post-F minus Pre-F, in percentage points \\
Pre-R, Post-R & Retain accuracy before and after recalibration \\
Ret-aw. & Retain-aware training objective or training stage \\
$\mathrm{LP}^M$, LP-50 & Forget-class linear-probe accuracy using 50 training samples per class, evaluated on the indicated model \\
$\Delta\mathrm{LP}$ & $\mathrm{LP}^{M_{\mathrm{cal}}}-\mathrm{LP}^{M}$ \\
BN-residual & $\mathrm{LP}^{M}-\mathrm{LP}^{M_{\mathrm{cal}}}$ \\
NC-residual & $\mathrm{LP}^{M_{\mathrm{cal}}}-\mathrm{LP}^{M_{\mathrm{retr}}}$ \\
ReAUC & Relearning area under the normalized accuracy curve \\
NCC, $\Delta\mathrm{NCC}$ & Nearest-class-mean accuracy; its post-minus-pre change \\
$r,\rho_{\mathrm S},\tau_{\mathrm K}$ & Pearson, Spearman, and Kendall correlation coefficients \\
MIA & Membership-inference attack \\
\end{longtable}
}

\clearpage

\end{document}